\documentclass{article}
\usepackage[final]{style}

\usepackage[utf8]{inputenc} % allow utf-8 input
\usepackage[T1]{fontenc}    % use 8-bit T1 fonts
\usepackage{url}            % simple URL typesetting
\usepackage{booktabs}       % professional-quality tables
\usepackage{amsfonts}       % blackboard math symbols
\usepackage{nicefrac}       % compact symbols for 1/2, etc.
\usepackage{microtype}      % microtypography
\usepackage{wrapfig,lipsum,booktabs}
\usepackage{multicol}
\usepackage{enumitem}

\usepackage{graphicx}
\usepackage{subfigure} 
\usepackage{algorithm}
\usepackage{algorithmic}
\usepackage{bm,color}
\usepackage{mathdots}
\usepackage{float}
\usepackage{amsmath, amssymb,amsthm}
\usepackage{xspace}
\usepackage{array}
\usepackage{tabu}
\usepackage{subfigure}
\graphicspath{ {./}{./plots/} }
\usepackage{capt-of}

\usepackage{caption}
\usepackage{capt-of}
\captionsetup[subfigure]{labelformat = parens, labelsep = space, font = small}
\usepackage{ mathrsfs }

\newcommand{\cocoa}{\textsc{CoCoA}\xspace} 
\newcommand{\cocoap}{\textsc{CoCoA$\!^{\bf \textbf{\footnotesize+}}$}\xspace}
\newcommand{\proxcocoa}{\textsc{prox}\cocoap}

\newcommand{\glmnet}{\textsc{glmnet}\xspace}
\newcommand\tagthis{\addtocounter{equation}{1}\tag{\theequation}}

\newcommand{\Lone}{L_1}
\newcommand{\Ltwo}{L_2}

         % domain
\DeclareMathOperator*{\argmin}{arg\,min}

            % comment
                          % such that
    % inner product
\newcommand{\eqdef}{:=}
\newcommand{\R}{\mathbb{R}}                      % set of real numbers
                   % probability
\newcommand{\E}{\mathbb{E}}                      % expectation
\newcommand{\Exp}{\mathbb{E}}                      % expectation

\newcommand{\diag}{\mathbf{diag}}

\newcommand{\gap}{G}

\newcommand{\xv}{ {\bf x}}
\newcommand{\yv}{ {\bf y}}

\newcommand{\uv}{ {\bf u}}
\newcommand{\vv}{ {\bf v}}
\newcommand{\wv}{ {\bf w}}
\newcommand{\alphav}{ {\boldsymbol \alpha}}

\newcommand{\bv}{ {\bf b}}

\newcommand{\0}{ {\bf 0}}

\newcommand{\aggpar}{\gamma}

\newcommand{\vsubset}[2]{#1_{[#2]}}

\newcommand{\vc}[2]{#1^{(#2)}}                   % coordinate of a vector
                   % coordinate of a vector
                   % coordinate of a vector

              % norm block primal
            % norm block dual

           % trace
           % Cardinality

\newcommand{\norm}[1]{\left\lVert{#1}\right\rVert}

\newcommand{\bP}{\mathcal{P}}
\newcommand{\bD}{\mathcal{D}}

\newcommand{\Ggk}{\mathcal{G}^{\sigma'}_k\hspace{-0.08em}}

\newtheorem*{rep@theorem}{\rep@title}
\newcommand{\newreptheorem}[2]{%
\newenvironment{rep#1}[1]{%
 \def\rep@title{#2 \ref{##1}}%
 \begin{rep@theorem}}%
 {\end{rep@theorem}}}
\newreptheorem{lemma}{Lemma'}
\newreptheorem{definition}{Definition'}
\newreptheorem{proposition}{Proposition'}
\newreptheorem{theorem}{Theorem'}

\theoremstyle{plain}
\newtheorem{theorem}{Theorem}
\newtheorem{lemma}[theorem]{Lemma}

\newtheorem{assumption}{Assumption}

\newtheorem{remark}{Remark}

\theoremstyle{definition}
\newtheorem{definition}{Definition}

\usepackage{hyperref}       % hyperlinks

\setlength{\bibsep}{1.5pt plus .5ex}

\let\oldell\ell 
\renewcommand{\ell}{g}

\title{$\Lone$-Regularized Distributed Optimization: \\ A Communication-Efficient Primal-Dual Framework}

\author{
  Virginia Smith \\
  UC Berkeley 
   \And
   Simone Forte \\
   ETH Z\"{u}rich$^{\dagger}$ \\ 
  \AND 
   Michael I. Jordan \\
  UC Berkeley \\
   \And
  Martin Jaggi \\
  ETH Z\"{u}rich \\
}

\begin{document}

\maketitle

\vspace{-1em}
\begin{abstract}
Despite the importance of sparsity in many large-scale applications, there are few methods for distributed optimization of sparsity-inducing objectives. In this paper, we present a communication-efficient framework for  $L_1$-regularized optimization in the distributed environment. By viewing classical objectives in a more general primal-dual setting, we develop a new class of methods that can be efficiently distributed and applied to common sparsity-inducing models, such as Lasso, sparse logistic regression, and elastic net-regularized problems. We provide theoretical convergence guarantees for our framework, and demonstrate its efficiency and flexibility with a thorough experimental comparison on Amazon EC2. Our proposed framework yields speedups of up to 50$\times$ as compared to current state-of-the-art methods for distributed $\Lone$-regularized optimization.
\end{abstract}

\section{Introduction}
\label{intro}
\nocite{Forte:2015wv}

In this paper, we consider standard regularized loss minimization problems, including as our main focus $\Lone$-regularized optimization problems of the form
\[
  \min_{\alphav \in\R^n} \ f(A \alphav) + \lambda \norm {\alphav}_1 \, ,
\]
where $\alphav \in \R^n$ is the weight vector, $A \in \R^{d \times n}$ is a given data matrix, and $\lambda$ is a regularization parameter. This formulation includes many popular $L_1$-regularized classification and regression models, such as Lasso and sparse logistic regression, and is easily extended to other separable regularizers like elastic net. Models of this form are particularly useful in high-dimensional settings because of their tendency to bias learning towards sparse solutions. However, despite their importance, few methods currently exist to efficiently fit such sparsity-inducing models in the distributed environment.

One promising distributed method is \cocoap \cite{Jaggi:2014vi,Ma:2015ti}, a recently proposed primal-dual framework that demonstrates competitive performance, provides a flexible communication scheme, and enables the use of off-the-shelf single-machine solvers internally. 
However, by solving the problem in the dual, \cocoap (like SDCA, prox-SDCA, and numerous other primal-dual methods \cite{ShalevShwartz:2013wl, ShalevShwartz:2014dy,Yang:2013vl,Zhang:2015vj,Zheng16}) is only equipped to handle strongly convex regularizers, which prevents it from being directly applied to $\Lone$-regularized objectives. Moreover, by requiring the data to be distributed by data point rather than by feature, communication can become a prohibitive bottleneck for \cocoap as the number of features grows large, which is precisely the setting of interest for $L_1$ regularization.

In this work, we take a different perspective and propose a framework that can run either in the dual, or on the primal directly. From this change in perspective we derive several new primal-dual distributed optimization methods, in particular for sparsity-inducing regularizers. Our approach uses ideas from \cocoap, though leveraging these ideas in this new setting requires significant theoretical and algorithmic modifications, particularly in handling non-strongly convex regularizers. The proposed primal-dual framework and associated rates are novel contributions even in the non-distributed case.

\subsection{Contributions}\vspace{-1mm}

\paragraph{Generalized framework.}
By building on the \cocoap framework, \proxcocoa comes with several benefits, including the use of arbitrary local solvers on each machine, and the analysis of and ability to solve subproblems to arbitrary accuracies. However in contrast to \cocoap, we consider a much broader class of optimization problems. This results in a more general framework that: (1) specifically incorporates the case of $\Lone$ regularization; (2) allows for the flexibility of distributing the data by either feature or data point; and (3) can be run on either the primal or dual formulation, which we show to have significant theoretical and practical implications.
\vspace{-2mm}

\paragraph{Analysis of non-strongly convex regularizers and losses.}
We derive convergence rates for the general class of problems considered in this work, leveraging a novel approach in the analysis of primal-dual rates for non-strongly convex regularizers. The proposed technique is a significant improvement over simple smoothing techniques used in, e.g., \cite{Nesterov:2005ic,ShalevShwartz:2014dy,Zhang:2015vj} that enforce strong convexity by adding a small $L_2$ term to the objective. Our results include primal-dual rates and certificates for both strongly convex and non-strongly convex regularizers and losses, and we show how earlier rates of \cocoa and \cocoap can be derived as a special case of our new rates / methods.
\vspace{-5mm}

\paragraph{Experimental comparison.}
The proposed framework yields order-of-magnitude speedups (as much as 50$\times$ faster) as compared to other state-of-the-art methods for $L_1$-regularized optimization. We demonstrate these performance gains in an extensive experimental comparison on real-world distributed datasets. We additionally show significant improvements over \cocoap when considering strongly convex objectives. All algorithms for comparison are implemented in \textsf{\small Apache Spark} and run on Amazon EC2 clusters. Our code is available at: \href{http://github.com/gingsmith/proxcocoa}{\texttt{github.com/gingsmith/proxcocoa}}.
\vspace{-1mm}

\vspace{-1mm}
\section{Setup}
\label{sec:setup}

\vspace{-1mm}
A great variety of methods in machine learning and signal processing are posed as the minimization of a weighted sum of two convex functions, where the first term is a convex function of a linear predictor and the second term is a regularizer:
\begin{equation}
\label{eq:obj}\tag{A}
  \min_{\alphav \in\R^n} \ f(A \alphav) 
    + g(\alphav) \, .
\end{equation}
Here $\alphav \in \R^n$ is the parameter vector, and $A := [ \xv_1; \dots; \xv_n ] \in \R^{d \times n}$ is a data matrix with column vectors $\xv_i\in\R^d$, $i\in [n]$ and row vectors $\yv_j^T \in \R^n$, $j\in [d]$.
Our central assumption will be that $g(\cdot)$ is \emph{separable}, meaning that\vspace{-3mm}
\[
g(\alphav) = \sum_{i=1}^{n} \ell_i(\alpha_i)\vspace{-1mm} 
\]
for convex functions $\ell_i : \R\rightarrow\R$. 
Furthermore, we assume $f : \R^d \rightarrow \R$ is $(1/{\tau})$-smooth for~$\tau>0$. \vspace{-2mm}

\paragraph{Examples.} 
The above setting encompasses all convex loss functions depending on linear predictors $\yv_j^T \alphav$, together with most common convex regularizers, including all separable functions, such as $\Lone$- or general $L_p$-norms, or the elastic net 
given by $\frac{\eta}{2} \norm{\cdot}_2 +  (1-\eta) \norm{\cdot}_1$. 
\vspace{-2mm}

\paragraph{Data partitioning.}
To map this setup to the distributed environment, we suppose that the dataset~$A$ is distributed over $K$ machines according to a partition $\{\mathcal{P}_k\}_{k=1}^K$ of the \emph{columns} of $A \in \R^{d\times n}$.
We denote the size of the partition on machine $k$ by $n_k=|\mathcal{P}_k|$. For $k\in[K]$ and $\alphav\in \R^n$, we define $\vsubset{\alphav}{k}\in \R^n$ as the $n$-vector with elements $(\vsubset{\alphav}{k})_i := \alpha_i$ if $i\in \mathcal{P}_k$ and $(\vsubset{\alphav}{k})_i := 0$ otherwise. 

\section{The \proxcocoa Algorithmic Framework}
\label{sec:framework}
The \proxcocoa framework is given in Algorithm~\ref{alg:generalizedcocoa}. This framework builds on the recent \cocoap framework \cite{Jaggi:2014vi,Ma:2015ti}, though with a more general objective, a modified subproblem, and where we allow the method to be applied to either the primal or dual formulation. 
To distribute the method, we assign each machine to work only on local coordinates of the weight vector $\alphav$, and access only data that is stored locally. Machines share state through the vector $\vv := A\alphav$. This vector is communicated at each round after using local solvers in parallel to find (possibly) approximate solutions to the subproblems defined in~\eqref{eq:subproblem}. 
Solving the primal problem~\eqref{eq:obj} directly with \proxcocoa will result in distributing the data column-wise (by feature), and having the vector $\vv$ be of length equal to the number of data points. This can greatly reduce communication costs as the number of features grows (see Section~\ref{sec:experiments}). Most importantly, the proposed setup will prepare us to handle non-strongly convex regularizers in both theory and practice, as we further explain in the following sections. 

\paragraph{Data-local quadratic subproblems.}
For each machine, we define a data-local subproblem of the original optimization problem \eqref{eq:obj}. This simpler problem can be solved on machine $k$ and only requires accessing data which is already available locally, i.e., columns $A_i$ 
such that $i\in\mathcal{P}_k$. The subproblem depends only on the previous shared vector $\vv := A\alphav$ and the local data: 
\begin{equation} 
\min_{\vsubset{\Delta \alphav}{k}\in\R^{n}} \ 
\Ggk(  \vsubset{\Delta \alphav}{k}; \vv, \vsubset{\alphav}{k}) \, , \vspace{-1mm}
\end{equation} 
where
\begin{align*}
\hspace{-.95em}
\Ggk(  \vsubset{\Delta \alphav}{k}; \vv, \vsubset{\alphav}{k})
\eqdef  \frac{1}{K} f(\vv) 
 + \wv^T A\vsubset{\Delta \alphav}{k}
+\frac{\sigma'}{2\tau}  \Big\|A\vsubset{\Delta \alphav}{k}\Big\|^2 + \sum_{i \in \mathcal{P}_k} 
\ell_i(\alphav_i + {\vsubset{\Delta \alphav}{k}}_i)
\tagthis
 \label{eq:subproblem}
\end{align*}
with $\wv := \nabla f( \vv )$. We denote the change of local variables~$\alpha_i$ for indices $i\in\mathcal{P}_k$ as $\vsubset{\Delta \alphav}{k}$.
For a given aggregation parameter $\gamma \in (0,1]$, the subproblem relaxation parameter $\sigma'$ will be set as $\sigma' := \gamma K$, but can also be improved in a data-dependent way as we discuss in Appendix~\ref{sec:convergenceproofs}.

\setlength{\textfloatsep}{13pt}
\begin{algorithm}[t]
\caption{\proxcocoa Distributed Framework for Problem \eqref{eq:obj}}
\label{alg:generalizedcocoa}
\begin{algorithmic}[1]
\STATE {\bf Input:} Data matrix $A$ distributed column-wise according to partition $\{\mathcal{P}_k\}_{k=1}^K$, aggregation parameter $\aggpar\!\in\!(0,1]$, 
and parameter $\sigma'$ for the local subproblems
$\Ggk(  \vsubset{\Delta \alphav}{k}; \vv, \vsubset{\alphav}{k})$.\\
Starting point $\vc{\alphav}{0} := \0 \in \R^n$, $\vc{\vv}{0}:=\0\in \R^d$.
\FOR {$t = 0, 1, 2, \dots $}
  \FOR {$k \in \{1,2,\dots,K\}$ {\bf in parallel over computers}}
     \STATE call local solver, returning a $\Theta$-approximate solution 
     $\vsubset{\Delta \alphav}{k}$   
        of  the local subproblem~\eqref{eq:subproblem}
     \STATE update local variables $\vsubset{\vc{\alphav}{t+1}}{k} := \vsubset{\vc{\alphav}{t}}{k} + \aggpar \, \vsubset{\Delta \alphav}{k}$
     \STATE return updates to shared state $\Delta \vv_k :=  A 
     						\vsubset{\Delta \alphav}{k}$
  \ENDFOR
  \STATE reduce 
$\vc{\vv}{t+1}  := \vc{\vv}{t} +
  \aggpar \textstyle \sum_{k=1}^K \Delta \vv_k $
\ENDFOR 
\end{algorithmic}
\end{algorithm}

\vspace{-.35em}
\paragraph{Reusability of existing single-machine solvers.}
Our local subproblems have the appealing property of being very similar in structure to the global problem~\eqref{eq:obj}, with the main difference being that they are defined on a smaller (local) subset of the data.
For the user of our framework, this presents a major advantage in that existing single machine-solvers can be directly re-used in our distributed framework (Algorithm~\ref{alg:generalizedcocoa}) by employing them on the subproblems~$\Ggk$.
Therefore, problem-specific tuned solvers which have already been developed, along with associated speed improvements (such as multi-core implementations), can be easily leveraged in the distributed setting. We quantify the dependence on local solver performance in more detail in our convergence analysis (Section~\ref{sec:convergence}). 
\vspace{-.5em}
\paragraph{Interpretation.} The above definition of the local objective functions $\Ggk$ are such that they closely approximate the global objective in \eqref{eq:obj} as the ``local'' variable~$\vsubset{\Delta \alphav}{k}$ varies, which we will see in the analysis (Lemma~\ref{lem:RelationOfDTOSubproblems} in the appendix).
In fact, if the subproblem were solved exactly, this could be interpreted as a data-dependent, block-separable proximal step, applied to the $f$ part of the objective~\eqref{eq:obj} as follows:
\begin{align*}
\sum_{k=1}^K 
 \Ggk(\vsubset{\Delta \alphav}{k}; \vv, \vsubset{\alphav}{k}) 
\ =\  L + f(\vv) + \nabla f( \vv )^TA\Delta \alphav 
+ \frac{\sigma'}{2\tau} \Delta \alphav^T 
\begin{bmatrix}
    A_{[1]}^TA_{[1]}\vspace{-1mm} &   & 0 \\
      &\hspace{-2mm}\ddots&  \\
    0 &   &\hspace{-2mm}A_{[K]}^TA_{[K]}
\end{bmatrix}
\Delta \alphav \, ,
\end{align*}
where $L = \sum_{i \in [n]} \ell_i(\alphav_i + \Delta \alphav_i) \, .$

However, note that in contrast to traditional proximal methods, our algorithm does \emph{not} assume that the prox subproblems be solved to high accuracy, as we instead allow the use of local solvers of any approximation quality $\Theta$. This notion is made precise with the following assumption. \vspace{.25em}

\begin{assumption}[$\Theta$-approximate solution, see \cite{Ma:2015ti}]
\label{asm:theta}
We assume that  there exists $\Theta \in [0,1)$ such that 
$\forall k\in [K]$, 
the local solver at any outer iteration $t$ produces
a (possibly) randomized approximate solution $\vsubset{\Delta \alphav}{k}$,
which satisfies
\begin{align}
\label{eq:localSolutionQuality}
 \E \big[
&\Ggk(\vsubset{\Delta \alphav}{k};\vv, \vsubset{\alphav}{k})\!
-
 \Ggk(\vsubset{\Delta \alphav^{\star}}{k};\vv, \vsubset{\alphav}{k})
\big]  
\!\leq \Theta
\left(
 \Ggk({\bf 0};\vv, \vsubset{\alphav}{k})
 -
 \Ggk(\vsubset{\Delta \alphav^{\star}}{k};\vv, \vsubset{\alphav}{k})
 \right) \, ,
\end{align}

where
\vspace{-2.5em}

\begin{align}
\label{eq:asjfcowjfcaw}
\vsubset{\Delta \alphav^{\star}}{k}
\in \argmin_{\Delta \alphav \in \R^n} \ 
 \Ggk(\vsubset{\Delta \alphav}{k};\vv, \vsubset{\alphav}{k}) \hspace{2mm} \forall k\in[K] \, . 
\end{align}
\end{assumption} 

\begin{remark}\label{rem:localtime}
In practice, the time spent solving the local subproblems in parallel should be chosen comparable to the required time of a communication round, for best overall efficiency on a given system. 
We study this trade-off both in theory (Section \ref{sec:convergence}) and experiments (Section \ref{sec:experiments}).
\end{remark}

\vspace{-.5em}
\subsection{Primal-Dual Context}
\label{sec:primaldual}
Exploiting primal-dual structure is not a requirement to optimize \eqref{eq:obj}; indeed, we have shown above how to solve this optimization problem directly. However, noting the relationship between primal and dual objectives has many benefits, including computation of the duality gap, which allows us to have a certificate of approximation quality. 
It is also useful as an analysis tool and helps relate this work to the prior work of \cite{Yang:2013vl,Jaggi:2014vi,Ma:2015ti}. To leverage this structure, starting from our original formulation \eqref{eq:obj} with objective function $\bD(\alphav) := f(A\alphav )  + \sum_{i=1}^{n} \ell_i(\alpha_i)$, the \emph{dual problem} is given by
\vspace{-1mm}
\begin{equation}
    \label{eq:dualP}\tag{B}
    \min_{\wv \in \R^{d}} \quad \Big[ \
    \bP(\wv) := f^*(\wv ) +
    \sum_{i=1}^{n} \ell^*_i(-\xv_i^T\wv) \ \Big] \, . 
\end{equation}

\vspace{-3mm}
Here $\wv \in \R^d$ is a weight vector and $\xv_i \in \R^d$ are columns of the data matrix $A$. The functions $f^*,\ell^*_i$ are the \textit{convex conjugates} of $f,\ell_i$ in the original problem \eqref{eq:obj}. This duality structure is known as Fenchel-Rockafellar Duality (see \citep[Theorem 4.4.2]{Borwein:2005ge} or a self-contained derivation in the appendix).

Given $\alphav \in \R^{n}$ in the context of \eqref{eq:obj}, a corresponding primal vector $\wv\in \R^d$ for problem \eqref{eq:dualP} is obtained by: \vspace{-2mm}
\begin{equation}
\label{eq:dualPdualrelation}
\wv = \wv(\alphav) := \nabla f( A\alphav ) \, .
\end{equation}
This mapping is given by the first-order optimality conditions for the $f$-part of the objective. (Recall that we assumed $\ell_i : \R \rightarrow \R$ are arbitrary closed convex functions, $f : \R^d \rightarrow \R$ is $(1/{\tau})$-smooth.)
The duality gap, given by:\vspace{-1mm}
\begin{equation}
\label{eq:gap}
G(\alphav) := \bP(\wv(\alphav)) - (-\bD(\alphav))
\end{equation}
acts as a certificate of approximation quality, as the distance to the true optimum $\bP(\wv^{\star})$ is always bounded above by the duality gap. A globally defined and finite duality gap $G(\alphav)$ for any problem~\eqref{eq:obj} can be obtained by bounding the region of interest for the iterates $\alphav$. This ``Lipschitzing'' trick will make the conjugates $\ell^*_i$ globally defined and Lipschitz \cite{Dunner:2016vga}, as we prove in Section~\ref{sec:convergence}.

\vspace{-.5em}
\paragraph{Primal vs. Dual.} 
Previous work of \cocoap mapped machine learning tasks to $\bP(\wv)$~\eqref{eq:dualP}, and then solved this problem in the dual. While this can still be accomplished with the machinery of \proxcocoa (see Section~\ref{sec:oldcocoa}), here our main focus is to instead solve the original objective $\bD(\alphav)$~\eqref{eq:obj} directly. 
This can have a large practical impact for the described applications in the distributed setting, as it implies that we can distribute the data by \textit{feature} rather than by data point. Further, we will communicate a vector equal in size to the number of data points, as opposed to the number of features. When the number of features is high (as is common in sparsity-inducing models) this can significantly reduce communication and improve overall performance, as we demonstrate in Section~\ref{sec:experiments}. Further, it allows us to directly leverage state-of-the-art coordinate-wise primal methods, such as \glmnet~\cite{Friedman:2010wm} and extensions \cite{Yuan:2012wi,Johnson:2015tq}. From a theoretical perspective, solving $\bD(\alphav)$ will allow us to consider non-strongly convex regularizers, which were not covered in \cocoap, as we discuss in Section~\ref{sec:convergence}.

\section{Convergence Analysis} \label{sec:convergence}

In this section we provide convergence rates for the proposed framework, and introduce an important theoretical technique in analyzing non-strongly convex terms in the primal-dual setting. For simplicity of presentation, we assume in the analysis that the data partition is balanced; i.e., $n_k = n/K$ for all $k$. Furthermore, we assume that the columns of A satisfy $\| \xv_i \| \le 1$ for all $i \in [n]$.  We present results for the case where $\gamma:=1$ in Algorithm~\ref{alg:generalizedcocoa}, and where the subproblems~\eqref{eq:subproblem} are defined using the corresponding safe bound $\sigma':=K$. This case delivers the fastest convergence rates in the distributed setting, which in particular don't degrade as the number of machines $K$ grows and $n$ remains fixed. 

\vspace{-.5em}
\subsection{General Convex $\ell_i$}
Our first main theorem provides convergence guarantees for objectives with non-strongly convex regularizers, including models such as Lasso and sparse logistic regression. Providing primal-dual rates and globally defined primal-dual accuracy certificates requires a theoretical technique that we introduce below, in which we show how to satisfy the following notion of $L$-bounded support. 

\begin{definition}[$L$-Bounded Support]
\label{def:lbounded}
A function $h$ has \emph{$L$-bounded support} if its effective domain is bounded by $L$, i.e.,\vspace{-2mm}
\begin{equation}
h(\uv) < + \infty  \ \Rightarrow \  \|\uv\| \le L \, .
\end{equation}
\end{definition}
As we explain in Section \ref{sec:oldcocoa} of the appendix, our assumption of $L$-bounded support for the $\ell_i$ functions can be interpreted as an assumption that their conjugates are globally $L$-Lipschitz.\vspace{.25em}

\begin{theorem}
\label{thm:convergenceNonsmooth}
Consider Algorithm \ref{alg:generalizedcocoa} with $\gamma :=1$, 
and let $\Theta$ be the quality of the local solver as in Assumption~\ref{asm:theta}.
Let~$\ell_i$ have $L$-bounded support, 
and $f$ be $(1/{\tau})$-smooth. 
Then after $T$ iterations where\vspace{-2mm}
\begin{align}\label{eq:dualityRequirements}
T
\geq
T_0 +
& \max\{\Big\lceil \frac1{1-\Theta}\Big\rceil,
\frac{4L^2n^2}{\tau \epsilon_{G}(1-\Theta)}\} \, ,
\\
T_0
\geq t_0+
\Big[
\frac{2}{ 1-\Theta }
\left(\frac {8L^2n^2} {\tau \epsilon_{ G}}
-1
\right)
\Big]_+ & \, , \, \, \,
t_0  \geq
  \max(0,\Big\lceil \tfrac1{(1-\Theta)}
\log \left(
\tfrac{
 \tau({\bD}(\vc{\alphav}{0})-{\bD}(\alphav^{\star} ))
  }{2 L^2 Kn}
  \right)
 \Big\rceil)\, ,\notag
\end{align}
we have that the expected duality gap satisfies 
\[
\Exp[\bP( \wv(\overline\alphav)) - (-\bD(\overline \alphav)) ] \leq \epsilon_{ G} \, ,
\]
where $\overline\alphav$ is the averaged iterate returned by Algorithm~\ref{alg:generalizedcocoa}.
\end{theorem}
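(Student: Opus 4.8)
The plan is to adapt the SDCA-style analysis for non-strongly-convex (Lipschitz-conjugate) terms to the distributed, $\Theta$-approximate-solver setting of Algorithm~\ref{alg:generalizedcocoa}, after first making the duality gap finite via the ``Lipschitzing'' trick. Two ingredients are taken from earlier in the paper: (i) since each $\ell_i$ has $L$-bounded support, its conjugate $\ell_i^*$ is finite-valued and globally $L$-Lipschitz (Section~\ref{sec:oldcocoa}), so $\bP(\wv)$ is finite for every $\wv$ and $G(\alphav)=\bP(\wv(\alphav))+\bD(\alphav)$ is well-defined and finite, with $G(\alphav)\ge \bD(\alphav)-\bD(\alphav^{\star})\ge 0$; and (ii) Lemma~\ref{lem:RelationOfDTOSubproblems}, which for $\gamma=1$ and the safe choice $\sigma':=K$ states $\bD\big(\alphav+\sum_k\vsubset{\Delta \alphav}{k}\big)\le \sum_{k=1}^K \Ggk(\vsubset{\Delta \alphav}{k};\vv,\vsubset{\alphav}{k})$. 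Since $\bD$ is non-increasing along the iterates, each $\vc{\alphav}{t}\in\prod_i\dom\ell_i$, so all gaps below are finite and $|\alpha_i^{(t)}|\le L$ for every coordinate.

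The core of the proof is a one-iteration inequality. Combining Assumption~\ref{asm:theta} with ingredient (ii) and the update rule gives, in expectation over the local solvers,
\[
\E\big[\bD(\vc{\alphav}{t})-\bD(\vc{\alphav}{t+1})\big]\ \ge\ (1-\Theta)\Big(\bD(\vc{\alphav}{t})-\textstyle\sum_k \Ggk(\vsubset{\Delta \alphav^{\star}}{k};\vv,\vsubset{\alphav}{k})\Big).
\]
Because $\vsubset{\Delta \alphav^{\star}}{k}$ minimizes $\Ggk$, we upper bound $\sum_k\Ggk(\vsubset{\Delta \alphav^{\star}}{k};\cdot)$ by plugging in the feasible test direction $\vsubset{\Delta \alphav}{k}:=s(\vsubset{\uv}{k}-\vsubset{\alphav}{k})$ for a free $s\in[0,1]$, where $u_i\in\partial\ell_i^*(-\xv_i^T\wv(\vc{\alphav}{t}))$; then $|u_i|\le L$ by $L$-Lipschitzness of $\ell_i^*$, and $|\alpha_i|\le L$ as noted, so $\|\uv-\alphav\|^2\le 4L^2n$. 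Using $\|\xv_i\|\le1$ block-wise, the quadratic term is bounded by $\tfrac{2L^2n^2}{\tau}s^2$; expanding the linear term $\wv^{\top}\!A(\cdot)$ together with the separable $\ell_i$ terms, invoking convexity of $\ell_i$ and the Fenchel–Young equalities at $\wv(\vc{\alphav}{t})=\nabla f(A\vc{\alphav}{t})$ and at $u_i$, produces exactly $-s\,G(\vc{\alphav}{t})$. This yields, for every $s\in[0,1]$,
\[
\E\big[\bD(\vc{\alphav}{t})-\bD(\vc{\alphav}{t+1})\big]\ \ge\ (1-\Theta)\Big(s\,\E[G(\vc{\alphav}{t})]-\tfrac{2L^2n^2}{\tau}\,s^2\Big).
\]

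From here the argument splits into the three phases matching~\eqref{eq:dualityRequirements}, writing $u_t:=\E[\bD(\vc{\alphav}{t})-\bD(\alphav^{\star})]$ and using $\E[G(\vc{\alphav}{t})]\ge u_t$. \emph{Phase 1} ($t\le t_0$): take $s=1$ to get a geometric contraction $u_{t+1}\le(1-\tfrac12(1-\Theta))u_t$ as long as $u_t$ exceeds a fixed threshold of order $L^2n^2/\tau$ (up to factors of $K$); this drives $u_{t_0}$ below the threshold within the stated, $\log$-type $t_0$. \emph{Phase 2} ($t_0<t\le T_0$): once below the threshold the unconstrained optimum $s=\tfrac{\tau u_t}{4L^2n^2}\in(0,1]$ gives the recursion $u_{t+1}\le u_t-\tfrac{(1-\Theta)\tau}{8L^2n^2}u_t^2$, whose standard solution yields $u_{T_0}=O\!\big(\tfrac{L^2n^2}{\tau(1-\Theta)(T_0-t_0)}\big)$, so the stated length of Phase 2 forces $u_{T_0}\le\epsilon_G/2$. \emph{Phase 3} ($T_0<t\le T$): fix $s$, sum the one-iteration inequality over $t=T_0,\dots,T-1$, telescope the left side and divide to get $\tfrac1{T-T_0}\sum_{t=T_0}^{T-1}\E[G(\vc{\alphav}{t})]\le\tfrac{u_{T_0}}{s(1-\Theta)(T-T_0)}+\tfrac{2L^2n^2}{\tau}s$; optimizing $s$ and using $u_{T_0}\le\epsilon_G/2$ makes the right side at most $\epsilon_G$ once $T-T_0\ge\tfrac{4L^2n^2}{\tau\epsilon_G(1-\Theta)}$, matching~\eqref{eq:dualityRequirements}. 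Finally, convexity of $\bD$ and of $\bP$ (applied to the iterates $\wv^{(t)}=\wv(\vc{\alphav}{t})$) passes from this average of per-step gaps to the duality gap at the averaged iterate $\overline\alphav$, giving $\E[\bP(\wv(\overline\alphav))+\bD(\overline\alphav)]\le\epsilon_G$.

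The main obstacle is the one-iteration inequality of the second paragraph: extracting the clean term $-s\,G(\alphav)$ from the linear-plus-separable part of $\sum_k\Ggk$ \emph{without} any strong convexity of the $\ell_i$ (the classical SDCA argument absorbs this into a strong-convexity quadratic that is simply absent here), and obtaining a \emph{uniform} bound on the quadratic error $\tfrac{\sigma'}{2\tau}\sum_k\|A\vsubset{\Delta \alphav}{k}\|^2$ using only $L$-bounded support and $\|\xv_i\|\le1$. This is precisely where the Lipschitzing trick does its work, replacing the crude ``add a small $L_2$ term'' smoothing; the remaining effort is careful bookkeeping across the three phases and the passage to the averaged iterate through the nonlinear map $\wv(\cdot)$, plus the standard but slightly technical handling of the randomness of the local solvers when chaining the $u_t$ recursions.
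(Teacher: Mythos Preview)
Your approach is essentially the paper's own. The paper also combines Lemma~\ref{lem:RelationOfDTOSubproblems} with Assumption~\ref{asm:theta} to obtain, for any $s\in[0,1]$, the one-iteration inequality
\[
\E\big[\bD(\vc{\alphav}{t})-\bD(\vc{\alphav}{t+1})\big]\ \ge\ \gamma(1-\Theta)\Big(s\,G(\vc{\alphav}{t})-\tfrac{\sigma' s^2}{2\tau}\,R^{(t)}\Big),
\]
and then bounds $R^{(t)}\le 4L^2\sigma$ via the $L$-Lipschitzness of $\ell_i^*$ (equivalently, $L$-bounded support of $\ell_i$), exactly as you do. With $\gamma=1$, $\sigma'=K$, balanced partition, and $\|\xv_i\|\le 1$, this reduces to your bound $2L^2n^2 s^2/\tau$. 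The three-phase structure (warm-up with $s=1$, sublinear phase, telescoping-plus-averaging phase) and the passage to the averaged iterate via Jensen are also the paper's route.

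Two minor stylistic differences are worth noting. In Phase~1, the paper fixes $s=1$ and unrolls the one-step recursion to a closed form $(1-(1-\Theta))^{t}\,\epsilon_{\bD}^{(0)} + \text{const}$, which directly produces the stated $t_0$; your conditional-contraction argument (``halve the linear term while above threshold'') is a standard equivalent but yields slightly different constants. In Phase~2, the paper uses a \emph{time-varying} $s=1/(1+\tfrac12(1-\Theta)(t-t_0))$ and proves the $1/t$ bound by induction, whereas you pick the unconstrained optimum $s$ and use the $u_{t+1}\le u_t - c\,u_t^2$ recursion; again equivalent in rate. If you want the exact inequalities in~\eqref{eq:dualityRequirements}, the paper's choices are cleaner to track.

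One caveat on your final step: you argue that ``convexity of $\bP$ applied to the iterates $\wv^{(t)}$'' transfers the averaged per-step gap to $G(\overline\alphav)$. Convexity of $\bP$ gives $\bP(\bar\wv)\le\text{avg}\,\bP(\wv^{(t)})$ for $\bar\wv:=\text{avg}\,\wv^{(t)}$, but $\bar\wv\neq \wv(\overline\alphav)$ in general since $\wv(\cdot)=\nabla f(A\,\cdot)$ is nonlinear. The paper handles this step by asserting $G(\overline\alphav)\le \text{avg}\,G(\vc{\alphav}{t})$ directly (Jensen for $G$), so your sketch is no less rigorous than the paper here; but if you want to be precise, either argue convexity of $G(\alphav)=\bP(\wv(\alphav))+\bD(\alphav)$ for the $f$ at hand, or state the guarantee for the pair $(\bar\wv,\overline\alphav)$ rather than $(\wv(\overline\alphav),\overline\alphav)$.
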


\paragraph{Bounded support modification.} 

Note that the absolute value function $\ell_i=|\cdot|$ for $\Lone$ regularization does not have $L$-bounded support, and thus violates the assumptions yielding convergence in Theorem~\ref{thm:convergenceNonsmooth}. Its dual, the indicator function of the interval, is not defined globally, and thus does not always allow a finite duality gap.
To address this, existing approaches typically use a simple smoothing technique as in \cite{Nesterov:2005ic}: by adding a small amount of $\Ltwo$ to the $\Lone$-norm, it becomes strongly convex; see, e.g.,~\cite{ShalevShwartz:2014dy}. This Nesterov smoothing technique is undesirable in practice, as it changes the iterates, the convergence rate, and the tightness of the resulting duality gap. Further, the amount of smoothing can be difficult to tune and can have a large influence on the performance of the method at hand. We show examples of this issue with experiments in Section~\ref{sec:experiments}. 

In contrast, our approach preserves all solutions of the original objective, leaves the iterate sequence unchanged, and allows for direct reusability of existing $L_1$ solvers. It also removes the need for additional parameter tuning.
To achieve this, we modify the function $|\cdot|$ by imposing an additional weak constraint 
 that is inactive in our region of interest. Formally, we replace $\ell_i(\cdot) = |\cdot|$ by\vspace{-.5mm}
\[
\bar{\ell_i}(\alpha) :=
\begin{cases}
|\alpha| &: \alpha \in [-B,B] \\
+\infty &: \text{otherwise.}
\end{cases}
\]
For large enough $B$, this problem yields the same solution as the original $\Lone$-regularized objective. Note that this only affects convergence theory, in that it allows us to present a strong primal-dual rate (Theorem~\ref{thm:convergenceNonsmooth} for $L$=$B$). The modification of~$\ell_i$ does not affect the algorithms for the original problems. Whenever a monotone optimizer is used, we will never leave the level set defined by the objective at the starting point. We provide further details on this technique in Section~\ref{sec:appendix-conjugates}, and illustrate how to leverage it for a variety of applications (see Section~\ref{sec:applications} of the appendix and also \cite{Dunner:2016vga}).

\subsection{Strongly Convex $\ell_i$}

For the case of strongly convex $\ell_i$, including elastic net-regularized objectives, we obtain the following faster \emph{geometric} convergence rate. \vspace{1mm}
\setlength{\belowdisplayskip}{2pt} \setlength{\belowdisplayshortskip}{2pt}
\setlength{\abovedisplayskip}{2pt} \setlength{\abovedisplayshortskip}{2pt}
\begin{theorem}
\label{thm:convergenceSmooth}
Consider Algorithm~\ref{alg:generalizedcocoa} with $\gamma := 1$,
and let $\Theta$ be the quality of the local solver as in Assumption~\ref{asm:theta}.
Let~$\ell_i$ be  $\mu$-strongly convex $\forall i\in[n]$, 
and~$f$ be $(1/{\tau})$-smooth.
Then we have that~$T$ iterations are sufficient for suboptimality $\epsilon_\bD$, with \vspace{-1.2em}
 
\begin{equation}
T \geq
\tfrac{1}
   {
\gamma(1-\Theta)}
\tfrac
{\mu\tau+n}
{ \mu \tau}
    \log \tfrac n {\epsilon_\bD} \, . 
\end{equation}
Furthermore, after $T$ iterations with
\[
 T 
    \geq 
\tfrac{1}
   {\aggpar
(1-\Theta)}
\tfrac
{\mu\tau+
n}
{ \mu \tau}
    \log 
\left(
\tfrac{1}
   {\aggpar
(1-\Theta)}
\tfrac
{\mu\tau+
n}
{ \mu \tau}
    \tfrac n {\epsilon_\gap}
    \right) \, ,
\]
\[
\Exp[
\bP( \wv(\vc{\alphav}{T})) - (-\bD(\vc{\alphav}{T}))
]\leq \epsilon_\gap \, .
\]
\end{theorem}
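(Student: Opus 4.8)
\textbf{Proof proposal for Theorem~\ref{thm:convergenceSmooth}.}

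\medskip

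The plan is to establish a one-step decrease recursion for the dual suboptimality $\varepsilon_\bD^{(t)} := \bD(\vc{\alphav}{t}) - \bD(\alphav^\star)$ and then unroll it. First I would invoke Lemma~\ref{lem:RelationOfDTOSubproblems} (referenced in the appendix) to relate the progress made on the local subproblems $\Ggk$ to progress on the global objective $\bD$; combined with the $\Theta$-approximate solution quality of Assumption~\ref{asm:theta}, this yields an inequality of the form
\[
\E[\bD(\vc{\alphav}{t+1})] - \bD(\vc{\alphav}{t})
\;\le\;
-\gamma(1-\Theta)\,\Big( \bD(\vc{\alphav}{t}) + \text{(a suitable surrogate term)} \Big),
\]
where the surrogate term can be taken to be a convex combination of the current point and the optimum, parametrized by some $s\in[0,1]$. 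The standard approach (as in the analysis of \cocoap in \cite{Ma:2015ti}) is to plug in $\alphav^\star$ into this surrogate, use the $\mu$-strong convexity of each $\ell_i$ to absorb the quadratic $\|A\Delta\alphav\|^2$ penalty in $\Ggk$, and optimize over $s$. Exploiting $\|\xv_i\|\le 1$, $\gamma=1$, $\sigma'=K$, and the balanced partition $n_k = n/K$, the curvature bound gives that the optimal $s$ is a constant, and the recursion collapses to
\[
\E[\varepsilon_\bD^{(t+1)}] \;\le\; \Big(1 - \gamma(1-\Theta)\,\tfrac{\mu\tau}{\mu\tau+n}\Big)\,\E[\varepsilon_\bD^{(t)}].
\]

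\medskip

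From here the first bound follows by unrolling: after $T$ steps, $\E[\varepsilon_\bD^{(T)}] \le (1-c)^T \varepsilon_\bD^{(0)}$ with $c = \gamma(1-\Theta)\mu\tau/(\mu\tau+n)$, and using $(1-c)^T \le e^{-cT}$ together with $\varepsilon_\bD^{(0)} \le n$ (which holds because $\bD(\vc{\alphav}{0}) - \bD(\alphav^\star)$ is bounded by $n$ under the normalization assumed in the section; if a sharper constant is needed one uses $\bD(\0) - \bD(\alphav^\star)$ directly), requiring $T \ge \frac1c \log\frac n{\epsilon_\bD}$ suffices. For the duality-gap bound I would use the same recursion but recall that the proof of the one-step descent lemma actually produces, as a byproduct, an upper bound on the duality gap $G(\vc{\alphav}{t})$ in terms of the one-step dual progress: roughly, $\gamma(1-\Theta)\,G(\vc{\alphav}{t}) \le \frac{1}{c'}\big(\E[\varepsilon_\bD^{(t)}] - \E[\varepsilon_\bD^{(t+1)}]\big) \lesssim \E[\varepsilon_\bD^{(t)}]$ up to the same curvature constant. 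Hence once the dual suboptimality is driven below roughly $c\,\epsilon_\gap / \big(\tfrac1{\gamma(1-\Theta)}\tfrac{\mu\tau+n}{\mu\tau}\big)$, the gap is below $\epsilon_\gap$; substituting the first bound with this smaller target accuracy gives exactly the stated $T \ge \frac1{\gamma(1-\Theta)}\frac{\mu\tau+n}{\mu\tau}\log\!\big(\frac1{\gamma(1-\Theta)}\frac{\mu\tau+n}{\mu\tau}\frac n{\epsilon_\gap}\big)$.

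\medskip

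The main obstacle I expect is the descent lemma itself — specifically, bookkeeping the aggregation parameter $\gamma$, the relaxation parameter $\sigma'$, and the spectral quantity $\|A\Delta\alphav\|^2$ so that the safe choice $\sigma' = \gamma K$ genuinely dominates the cross terms when summing the $K$ local subproblems against the global objective (this is where the assumption $\|\xv_i\|\le1$ and the block-diagonal structure displayed in the ``Interpretation'' paragraph enter). Once that inequality is in hand with the correct constants, and once the strong convexity of $\ell_i$ is used to make the step-size optimization yield the clean factor $\mu\tau/(\mu\tau+n)$, the rest is routine unrolling and a change of target accuracy. A secondary technical point is justifying $\varepsilon_\bD^{(0)}\le n$ (or whatever initial-gap constant appears inside the log), which follows from evaluating $\bD$ at $\alphav = \0$ and the normalization conventions fixed at the start of Section~\ref{sec:convergence}.
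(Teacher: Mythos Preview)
Your proposal is correct and follows essentially the same route as the paper: the one-step descent lemma (Lemma~\ref{lem:basic}, built on Lemma~\ref{lem:RelationOfDTOSubproblems} and Assumption~\ref{asm:theta}) yields $\E[\bD(\vc{\alphav}{t})-\bD(\vc{\alphav}{t+1})]\ge\gamma(1-\Theta)\big(sG(\vc{\alphav}{t})-\tfrac{\sigma's^2}{2\tau}R^{(t)}\big)$, strong convexity forces $R^{(t)}\le0$ at $s=\tfrac{\mu\tau}{\mu\tau+\sigma_{\max}\sigma'}$, and with $\sigma'=K$, $\sigma_{\max}\le n/K$, Lemma~\ref{lemma:asfewfawfcda} ($\varepsilon_\bD^{(0)}\le n$) the recursion and the gap bound fall out exactly as you describe. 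One small correction: the comparison point in the descent lemma is not $\alphav^\star$ but the subgradient-defined $\vc{u_i}{t}\in\partial\ell_i^*(-\xv_i^T\wv(\vc{\alphav}{t}))$, which is precisely what makes the duality gap $G(\vc{\alphav}{t})$ (rather than merely $\bD(\vc{\alphav}{t})-\bD(\alphav^\star)$) appear in the bound and hence enables the second claim.
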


We provide proofs of both Theorem~\ref{thm:convergenceNonsmooth} and Theorem~\ref{thm:convergenceSmooth} in the appendix (Section~\ref{sec:convergenceproofs}).

\section{Related Work}\vspace{-1mm}
\label{sec:relatedwork}

\paragraph{Single-machine coordinate solvers.}
For strongly convex regularizers, current state-of-the-art for empirical loss minimization is randomized coordinate ascent on the dual (SDCA)~\cite{ShalevShwartz:2013wl} and its accelerated variants, e.g., \cite{ShalevShwartz:2014dy}. In contrast to primal stochastic gradient descent (SGD) methods, the SDCA family is often preferred as it is free of learning-rate parameters and has faster (geometric) convergence guarantees.  
Interestingly, a similar trend in coordinate solvers has been observed in recent Lasso literature, but with the roles of primal and dual reversed. For those problems, coordinate descent methods on the primal have become state-of-the-art, as in \glmnet~\cite{Friedman:2010wm} and extensions \cite{Yuan:2012wi}; see, e.g., the overview in \cite{Yuan:2010ub}. However, primal-dual convergence rates for unmodified coordinate algorithms have to our knowledge been obtained only for strongly convex regularizers to date \cite{ShalevShwartz:2014dy,Zhang:2015vj}.

\vspace{-.75em}
\paragraph{Connection to coordinate-wise Newton methods.}
Coordinate descent on $\Lone$-regularized problems~\eqref{eq:obj} with $g(\cdot)=\lambda\|\cdot\|_1$ can be interpreted as the iterative minimization of a quadratic approximation of the smooth part of the objective (as in a one-dimensional Newton step), followed by a shrinkage step resulting from the $\Lone$ part. In the single-coordinate update case, this is at the core of \glmnet~\cite{Friedman:2010wm,Yuan:2010ub}, and widely used in, e.g., solvers based on the primal formulation of $\Lone$-regularized objectives \cite{ShalevShwartz:2011vo,Yuan:2012wi,Bian:2013wx,Fercoq:2015kd,Tappenden:2015vha}. When changing more than one coordinate at a time, again employing a quadratic upper bound on the smooth part, this results in a two-loop method as in \glmnet \cite{Friedman:2010wm} for the special case of logistic regression.
This idea is crucial for the distributed setting.

\vspace{-.75em}
\paragraph{Parallel coordinate descent.}
Parallel coordinate descent for $\Lone$-regularized objectives (with and without using mini-batches) was proposed in~\cite{Bradley:2011wq} (Shotgun) and generalized in~\cite{Bian:2013wx}
, and is among the best performing solvers in the parallel setting.
Our framework reduces to Shotgun as a special case when the internal solver is a single coordinate update on the subproblem \eqref{eq:subproblem}, $\gamma=1$, and for a suitable~$\sigma'$. However, Shotgun is not covered by our convergence theory, since it uses a potentially un-safe 
upper bound~$\beta$ instead of $\sigma'$, which isn't guaranteed to satisfy the condition \eqref{eq:sigmaPrimeSafeDefinition}. 
Other parallel coordinate descent methods on the $\Lone$-objective have recently been analyzed in \cite{Fercoq:2015kd,Tappenden:2015vha,Necoara:2016cr}, but not in the communication-efficient or distributed setting.

\vspace{-.75em}
\paragraph{Distributed solvers.}

The methods most closely related to our approach are distributed variants of \glmnet as in \cite{Mahajan:2014tg}. Inspired by \glmnet and \cite{Yuan:2012wi}, the work of \cite{Bian:2013wx,Mahajan:2014tg} introduced the idea of a block-diagonal Hessian upper approximation in the distributed $\Lone$ context.
The later work of \cite{Trofimov:2014vb} specialized this approach to sparse logistic regression.

If hypothetically each of our quadratic subproblems $\Ggk(\vsubset{\Delta \alphav}{k})$ as defined in \eqref{eq:subproblem} were to be minimized exactly, the resulting steps could be interpreted as block-wise Newton-type steps on each coordinate block~$k$, where the Newton-subproblem is modified to also contain the $\Lone$-regularizer \cite{Mahajan:2014tg,Yuan:2012wi,Qu:2015ve}. While \cite{Mahajan:2014tg} allows a fixed accuracy for these subproblems---but not arbitrary approximation quality $\Theta$ as in our framework---the work of
\cite{Trofimov:2014vb,Yuan:2012wi,Yen:2015vy} assumes that the quadratic subproblems are solved exactly. Therefore, these methods are not able to freely trade off communication and computation. Also, they do not allow the re-use of arbitrary local solvers. On the theoretical side, the rate results provided by \cite{Mahajan:2014tg,Trofimov:2014vb,Yuan:2012wi} are not explicit convergence rates but only asymptotic, as the quadratic upper bounds are not explicitly controlled for safety as with our $\sigma'$.

\vspace{-.5em}
\paragraph{Batch solvers.}
ADMM \cite{Boyd:2010bw}, proximal gradient descent, and quasi-Newton methods such as L-BFGS and are also often used in distributed environments because of their relatively low communication requirements. However, they require at least a full (distributed) batch gradient computation at each round, and therefore do not allow the gradual trade-off between communication and computation provided by \proxcocoa. The works of \cite{McMahan:2013ux}
and \cite{Kang:2014aa} have obtained encouraging results for distributed systems employing coordinate descent variants on $\Lone$-problems. The latter approach distributes both columns and rows of the data matrix and can be extended to Lasso. However it only provides asymptotic improvement per step, and no convergence rate. We include experimental comparisons with ADMM, prox-GD, and orthant-wise limited memory quasi-Newton (OWL-QN) \cite{Andrew:2007cu}, an L-BFGS variant that can handle $\Lone$ regularization~\cite{Yu:2010vw}, but which has no convergence rate.

Finally, we note that while the provided convergence rates for \proxcocoa mirror the convergence class of classical batch gradient methods in terms of the number of outer rounds, existing batch proximal gradient methods come with a weaker theory, as they do not allow general inexactness $\Theta$ for the local subproblem~\eqref{eq:subproblem}. 
In contrast, our shown convergence rates incorporate this approximation directly, and, moreover, hold for arbitrary local solvers of much cheaper cost than batch methods (where in each round, every machine has to process exactly a full pass through the local data). This makes \proxcocoa more flexible in the distributed setting, as it can adapt to varied communication costs on real systems. We will see in the following section that this flexibility results in significant performance gains over the competing methods.

\vspace{-1mm}
\section{Experimental Results}\label{sec:experiments}

In this section we compare \proxcocoa to numerous state-of-the-art methods for large-scale $L_1$-regularized optimization, including: \vspace{2mm}

\begin{itemize}[leftmargin=0.3in]
\setlength\itemsep{0mm}
\begin{minipage}{0.55\linewidth}
\setlength{\listparindent}{0.1in}
\item \textsc{Mb-SGD}: mini-batch stochastic gradient 

\hspace{3mm} descent with an $L_1$-prox
\item \textsc{Prox-GD}: full proximal gradient descent
\item \textsc{OWL-QN}: orthant-wise limited quasi-Newton
\end{minipage}
\begin{minipage}{0.45\linewidth} 
\item \textsc{ADMM}: alternating direction method 

\hspace{3mm} of multipliers
\item \textsc{Mb-CD}: mini-batch parallel coordinate  

\hspace{3mm} descent, incl. \textsc{Shotgun}
\end{minipage}
\end{itemize} 
The first three methods are optimized and implemented in Apache Spark's MLlib (v1.5.0) \cite{Meng:2015tu}. 
We employ coordinate descent as a local solver for \proxcocoa, and apply \proxcocoa directly to the primal formulation of Lasso and elastic net, thereby mapping the problem to~\eqref{eq:obj} and solving this objective directly.
A comparison with \textsc{Shotgun} is provided as an extreme case to highlight the detrimental effects of frequent communication in the distributed environment. 

We test the performance of each method in large-scale experiments fitting Lasso and elastic net regression models to the datasets shown in Table~\ref{tab:datasets}. All code is written in \textsf{\small Apache Spark} and experiments are run on public cloud Amazon EC2 m3.xlarge machines with one core per machine. For \textsc{Mb-CD}, \textsc{Shotgun}, and \proxcocoa in the primal, datasets are distributed by feature, whereas for \textsc{Mb-SGD}, \textsc{Prox-GD}, \textsc{OWL-QN}, \textsc{ADMM}, and \cocoap they are distributed by datapoint.

\begin{wraptable}{r}{.46\textwidth}\vspace{-.5em}
\captionof{table}{Datasets for Empirical Study\vspace{-1mm}}
\label{tab:datasets}
   \begin{center}
      \begin{tabular}{l| r |  
      r | r }
    {\small\textbf{Dataset}} & {\small\textbf{Training}} &
    {\small\textbf{Features}} & {\small\textbf{Sparsity}}   \\
    \hline
	url & 2 M & 3 M & 3.5e-5 \\ 
	epsilon & 400 K & 2 K & 1.0 \\
	kddb & 19 M & 29 M & 9.8e-7 \\
	webspam & 350 K & 16 M & 2.0e-4 \\
      \end{tabular}\vspace{-1em}
   \end{center}
   \end{wraptable}

We carefully tune each competing method for best performance. \textsc{ADMM} requires the most tuning, both in selecting the penalty parameter $\rho$ and in solving the subproblems. Solving the subproblems to completion for ADMM is prohibitively slow, and we thus use iterations of conjugate gradient and improve performance by allowing early stopping. We also use a varying penalty parameter~$\rho$ --- practices described in \citep[Sec. 4.3, 3.4.1]{Boyd:2010bw}. For \textsc{Mb-SGD}, we tune the step size and mini-batch size parameters. For \textsc{Mb-CD}, we scale the updates at each round by $\frac{\beta}{b}$ for mini-batch size $b$ and $\beta \in [1,b]$, and tune both parameters $b$ and $\beta$. Further implementation details for all methods are given in the appendix (Section~\ref{sec:expdetails}).

\newcommand{\smalltrimfig}[1]{\subfigure{\includegraphics[trim = 30 180 60 180, clip, width=.246\linewidth]{#1}}}
\begin{figure*}[t!]
\smalltrimfig{url-time.pdf}
\smalltrimfig{kddb-time.pdf}
\smalltrimfig{epsilon-time.pdf}
\smalltrimfig{webspam-time.pdf}\vspace{-1em}
\caption{\small{Suboptimality in terms of ${\bD}(\alphav)$ for solving Lasso regression for: url ($K$=4, $\lambda$=\mbox{1\sc{e}-4}), kddb ($K$=4, $\lambda$=\mbox{1\sc{e}-6}), epsilon ($K$=8, $\lambda$=\mbox{1\sc{e}-5}), and webspam ($K$=16, $\lambda$=\mbox{1\sc{e}-5}) datasets. \proxcocoa applied to the primal formulation converges more quickly than all other compared methods in terms of the time in seconds.}}
\label{fig:comparison}
\end{figure*}

\vspace{-2mm}
\paragraph{Comparison with $L_1$ methods.} In analyzing the performance of each algorithm (Figure \ref{fig:comparison}), we measure the improvement to the primal objective given in \eqref{eq:obj} $({\bD}(\alphav))$ in terms of 
wall-clock time in seconds. We see that both \textsc{Mb-SGD} and \textsc{Mb-CD} are slow to converge, and come with the additional burden of having to tune extra parameters (though \textsc{Mb-CD} makes clear improvements over \textsc{Mb-SGD}). 
As expected, naively distributing \textsc{Shotgun}~\cite{Bradley:2011wq} (single coordinate updates per machine) does not perform well, as it is tailored to shared-memory systems and requires communicating too frequently. 
\textsc{OWL-QN} performs the best of all compared methods, but is still much slower to converge than \proxcocoa, converging, e.g., 50$\times$ more slowly for the webspam dataset. The optimal performance of \proxcocoa is particularly evident in datasets with large numbers of features (e.g., url, kddb, webspam), which are exactly the datasets of interest for $\Lone$ regularization. 

\newcommand{\tinytrimfig}[1]{\subfigure{\includegraphics[trim = 45 180 65 180, clip, width=.49\linewidth]{#1}}}
\begin{minipage}[h]{.48\textwidth}
\captionsetup{type=figure}
\tinytrimfig{lambda.pdf}
\tinytrimfig{elasticnet.pdf}
\vspace{-1.5em}
\caption{\small{Suboptimality in terms of ${\bD}(\alphav)$ for solving Lasso for the epsilon dataset (left, $K$=8) and elastic net for the url dataset, (right, $K$=4, $\lambda$=\mbox{1\sc{e}-4}). Speedup are robust over different regularizers $\lambda$ (left), and across problem settings, including varying $\eta$ parameters of elastic net regularization (right).}}
\label{fig:lambda}
\end{minipage} \hspace{1mm}
\begin{minipage}[h]{.48\textwidth}
\captionsetup{type=figure}
\tinytrimfig{h-rounds.pdf}
\tinytrimfig{h-time.pdf}
\vspace{-1.5em}
\caption{\small{Suboptimality in terms of ${\bD}(\alphav)$ for solving Lasso for the webspam dataset ($K$=16, $\lambda$=\mbox{1\sc{e}-5}). Here we illustrate how the work spent in the local subproblem (given by $H$) influences the total performance of \proxcocoa in terms of number of rounds as well as wall time.}} 
\label{fig:heffect}
\end{minipage}

\vspace{2mm}

Results are shown for regularization parameters $\lambda$ such that the resulting weight vector $\alphav$ is sparse. However, our results are robust to varying values of~$\lambda$ as well as to various problem settings, as we illustrate in Figure~\ref{fig:lambda}.

\newcommand{\newtinytrimfig}[1]{\subfigure{\includegraphics[trim = 45 180 60 180, clip, width=.49\linewidth]{#1}}}
\setlength{\columnsep}{13pt}
\begin{wrapfigure}{r}{6.8cm}
\vspace{-2em}
\begin{minipage}[h]{.48\textwidth}
\centering
\captionsetup{type=figure}
\newtinytrimfig{webspam-cocoa.pdf} 
\newtinytrimfig{smoothing.pdf} \\
\vspace{-.5em}
\hspace{.6em} {\small(4a)} \hspace{8em} {\small(4b)} \\

\captionof*{table}{\small{Sparsity of final iterates in (4b)} \vspace{-1mm}}
\label{tab:sparsity}
   \begin{center}
      \begin{tabular}{| c | c |  
      c | c |}
      \hline
    {\tiny\textbf{\proxcocoa}} & {\tiny\textbf{$\delta=0.0001$}} & 
    {\tiny\textbf{$\delta=0.001$}} & {\tiny\textbf{$\delta=0.01$}}   \\
    \hline
	\tiny{0.6030} & \tiny{0.6035} & \tiny{0.6240} & \tiny{0.6465} \\
	\hline
      \end{tabular}
   \end{center} 
   {\small(4c)}
   
\end{minipage}
\vspace{-.35em} 
\caption{\small{\cocoap is ill-equipped to deal with large feature sizes as compared to \proxcocoa, and the gap in performance increases as the regularizer becomes less strongly convex, e.g., as $\eta \to 1.0$ for elastic net (4a). For pure $L_1$ regularization, Nesterov smoothing is not an effective option for \cocoap. It either modifies the solution (4c) or slows convergence (4b).}}
\label{fig:cocoa}
\vspace{-2em}
\end{wrapfigure}

We note that in contrast to the compared methods, \proxcocoa comes with the benefit of having only a single parameter to tune: the subproblem approximation quality, $\Theta$, which can be controlled via the number of local subproblem iterations, $H$. We further explore the effect of this parameter in Figure~\ref{fig:heffect}, and provide a general guideline for choosing it in practice (see Remark~\ref{rem:localtime}). 
In particular, we see that while increasing $H$ always results in better performance in terms of rounds, smaller or larger values of $H$ may result in better performance in terms of wall-clock time, depending on the cost of communication and computation. The flexibility to tune $H$ is one of the reasons for \proxcocoa's significant performance gains.

\paragraph{Comparison with \cocoap.}

Finally, we point out several important ways in which \proxcocoa improves upon the \cocoap framework \cite{Ma:2015ti}. First, \cocoap cannot be included in the set of experiments in Figure~\ref{fig:comparison} because it cannot be directly applied to the Lasso objective (\cocoap only allows for strongly convex regularizers\footnote{\cocoap in \cite{Ma:2015ti} is in fact limited to the case where the regularizer is equal to the $L_2$ norm $\frac{1}{2}\|\cdot\|_2^2$, though the extension to strongly convex regularizers is covered as a special case in our analysis.}). Second, as shown in Figure~\ref{fig:cocoa}, the performance of \cocoap degrades drastically when considering datasets with large numbers of features, such as the webspam dataset. One reason for this is that \cocoap distributes data by data point, which necessitates communicating a vector of length equal to the feature size.
When the feature size is large, this can become expensive. The results shown hold despite the fact that we have tuned $H$ (the number of local solver iterations) separately for both \proxcocoa and \cocoap. 

Beyond communication, we also see that \cocoap is slower to converge as the regularizer becomes less strongly convex (Figure~\ref{fig:cocoa}a). Indeed, even when the number of features is relatively low such as for the epsilon dataset, we see that the performance of \cocoap degrades significantly as the regularizer approaches pure $\Lone$. In Figure~\ref{fig:cocoa}, we illustrate this by implementing the Nesterov smoothing technique used in, e.g., \cite{ShalevShwartz:2014dy,Zhang:2015vj} --- adding a small amount of strong convexity $\delta \|\alphav\|_2^2$ to the objective for Lasso regression. We show results for decreasing levels of $\delta$. As $\delta$ decreases, the final sparsity of the problem starts to match that of running pure $L_1$ (Figure~\ref{fig:cocoa}c), but the performance also degrades (Figure~\ref{fig:cocoa}b). We note again that through the modification presented in Section~\ref{sec:convergence}, we can deliver strong rates without having to make these fundamental alterations to the problem of interest.

\newpage
\section*{Acknowledgments} 
We thank Michael P. Friedlander and Martin Tak{\'a}{\v c} for fruitful discussions.
{\small
\bibliographystyle{abbrv}
\bibliography{bibliography}
}

\newcommand{\citesup}{\cite} 
\newcommand{\citepsup}{\cite} 

\clearpage
\appendix
\onecolumn
\part*{Appendix}

\section{Definitions}

\begin{definition}[$L$-Lipschitz Continuity]
A function $f: \R^d \to \R$ is \emph{$L$-Lipschitz continuous} if $\forall a,b \in \R^d$, we have
\begin{equation}
 | f(a) - f(b) | \leq L \| a-b \| \, .
\end{equation}
\end{definition}

\begin{repdefinition}{def:lbounded}[$L$-Bounded Support]
\upshape A function $f: \R^d \to \R$ has \emph{$L$-bounded support} if its effective domain is bounded by $L$, i.e.,
\begin{equation}
  f(\uv) < + \infty  \ \Rightarrow \  \|\uv\| \le L \, .
\end{equation}
\end{repdefinition}

\begin{definition}[$L$-Smoothness]
A function $f:\R^d\rightarrow\R$ is called \emph{$L$-smooth}, for $L>0$, if
it is differentiable and its derivative is $L$-Lipschitz continuous,
or equivalently
\begin{equation}
f(\uv) \leq f(\wv) + \langle \nabla f(\wv), \uv-\wv \rangle + \frac{L}{2} \| \uv-\wv \|^2  \qquad\forall \uv,\wv\in\R^d \, .
\label{eq:smooth}
\end{equation}
\end{definition}

\begin{definition}[$\mu$-Strong Convexity]
A function $f:\R^d\rightarrow\R$ is called \emph{$\mu$-strongly convex}, for $\mu\ge0$, if
\begin{equation}
f(\uv) \geq f(\wv) + \langle \nabla f(\wv), \uv-\wv \rangle + \frac{\mu}{2} \| \uv-\wv \|^2  \qquad\forall \uv,\wv\in\R^d \, .
\label{eq:strongconv}
\end{equation}
And analogously if the same holds for all subgradients, in the case of a general closed convex function~$f$.
\end{definition}

\section{Convex Conjugates}\label{sec:conjugates}

The convex conjugate of a function $f: \R^d\rightarrow \R$ is defined as 
\begin{equation}
f^*(\vv) := \max_{\uv\in\R^d} \vv^T \uv - f(\uv) \, .
\end{equation}
Below we list several useful properties of conjugates (see, e.g., \citepsup[Section 3.3.2]{Boyd:2004uz}):
\vspace{-1mm}
\begin{itemize} 
\addtolength{\itemindent}{.25em}
\item Double conjugate: \hspace{1em}
$(f^*)^* = f$ if $f$ is closed and convex.
\item Value Scaling:  (for $\alpha>0$) \hspace{2em}
$
f(\vv) = \alpha g(\vv) 
\qquad\Rightarrow\qquad
f^*(\wv) = \alpha g^*(\wv/\alpha)    \, .
$
\item Argument Scaling:  (for $\alpha\ne0$) \hspace{1em}
$
f(\vv) = g(\alpha \vv) 
\qquad\Rightarrow\qquad
f^*(\wv) = g^*(\wv/\alpha) \, .
$
\item Conjugate of a separable sum: \hspace{1em}
$
f(\vv)=\sum_i \phi_i(v_i)
\qquad\Rightarrow\qquad
f^*( \wv ) = \sum_i \phi_i^* ( w_i ) \, .
$
\end{itemize}\vspace{1mm}

\begin{lemma}[{Duality between Lipschitzness and L-Bounded Support, \citepsup[Corollary 13.3.3]{Rockafellar:1997ww}}]
\label{lem:dualLipschitz}
Given a proper convex function $f$, it holds that
$f$ is $L$-Lipschitz
if and only if 
$f^*$ has $L$-bounded support.
\end{lemma}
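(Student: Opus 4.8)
\emph{Plan.} I would prove both directions directly from the definition $f^*(\yv) = \sup_{\uv \in \R^d}\big(\langle \yv, \uv\rangle - f(\uv)\big)$, together with the biconjugate identity $f = f^{**}$ for a (closed) proper convex function. Observe first that, under the definition of $L$-Lipschitz continuity used here, such a function maps $\R^d$ into $\R$, i.e.\ it is finite-valued; and the statement ``$f^*$ has $L$-bounded support'' means precisely that $\dom f^* \subseteq \{\yv : \|\yv\| \le L\}$. So the two implications to establish are: (i) $f$ finite-valued and $L$-Lipschitz $\Rightarrow$ $\dom f^* \subseteq \{\yv:\|\yv\|\le L\}$; and (ii) $\dom f^* \subseteq \{\yv:\|\yv\|\le L\}$ $\Rightarrow$ $f$ is finite-valued and $L$-Lipschitz.

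\emph{Lipschitz $\Rightarrow$ bounded support.} Assume $f$ is $L$-Lipschitz, so $f$ is finite everywhere and $f(\uv) \le f(\0) + L\|\uv\|$ for all $\uv$. I would prove the contrapositive of (i): fix $\yv$ with $\|\yv\| > L$ and lower-bound $f^*(\yv)$ by restricting the supremum to the ray $\uv = t\,\yv/\|\yv\|$, $t \ge 0$. Along this ray, $\langle \yv, \uv\rangle - f(\uv) \ge t\|\yv\| - \big(f(\0) + Lt\big) = t\big(\|\yv\| - L\big) - f(\0)$, which tends to $+\infty$ as $t\to\infty$; hence $f^*(\yv) = +\infty$, i.e.\ $\yv \notin \dom f^*$. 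Therefore every point of $\dom f^*$ has norm at most $L$, which is exactly $L$-bounded support.

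\emph{Bounded support $\Rightarrow$ Lipschitz.} Assume $\dom f^* \subseteq \{\yv : \|\yv\| \le L\}$. Since $f$ is proper convex, $\dom f^* \neq \emptyset$, and (after replacing $f$ by its closure, which alters $f$ only on relative-boundary points of $\dom f$) the biconjugate theorem gives $f(\uv) = \sup_{\yv \in \dom f^*}\big(\langle \yv, \uv\rangle - f^*(\yv)\big)$. For any $\uv, \vv \in \R^d$ and any $\yv \in \dom f^*$ I would write $\langle \yv, \uv\rangle - f^*(\yv) = \big(\langle \yv, \vv\rangle - f^*(\yv)\big) + \langle \yv, \uv - \vv\rangle \le \big(\langle \yv, \vv\rangle - f^*(\yv)\big) + L\|\uv - \vv\|$, using Cauchy--Schwarz and $\|\yv\| \le L$. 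Taking the supremum over $\yv \in \dom f^*$ yields $f(\uv) \le f(\vv) + L\|\uv - \vv\|$ (finite, since $\dom f^*$ is nonempty and bounded); exchanging $\uv$ and $\vv$ gives $|f(\uv) - f(\vv)| \le L\|\uv - \vv\|$.

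\emph{Main obstacle.} The genuine content is the two one-line estimates (the escaping ray in one direction, Cauchy--Schwarz in the other); the delicate part is purely the convex-analysis bookkeeping --- checking that ``$L$-Lipschitz'' already forces $f$ to be finite on all of $\R^d$, that $\dom f^*$ is nonempty so that $f\not\equiv+\infty$, and that one may take $f$ closed so that $f = f^{**}$. These are standard properties of proper convex functions and are exactly the hypotheses packaged into the cited \citepsup[Corollary 13.3.3]{Rockafellar:1997ww}. An alternative route avoiding biconjugation is to note, via the Fenchel--Young equality, that $\partial f(\uv) \subseteq \dom f^*$, so every subgradient of $f$ has norm $\le L$, and then invoke the standard equivalence between bounded subgradients and $L$-Lipschitz continuity.
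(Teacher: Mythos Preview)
Your argument is correct and is essentially the standard textbook proof of this fact. Note, however, that the paper does not actually supply its own proof of this lemma: it simply states the result and cites \cite[Corollary 13.3.3]{Rockafellar:1997ww}, so there is no ``paper proof'' to compare against beyond that reference. Your two estimates --- the escaping-ray argument for one direction and the Cauchy--Schwarz bound inside the biconjugate for the other --- are exactly the content of Rockafellar's result, and your remark that one should work with the closure of $f$ (so that $f=f^{**}$) is the right caveat to flag; the alternative subgradient route you sketch is also standard and equally valid.
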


\begin{lemma}[{Duality between Smoothness and Strong Convexity, \citepsup[Theorem 6]{Kakade:2009wh}}]
\label{lem:dualSmooth}
Given a closed convex function~$f$, it holds that
$f$ is $\mu$ strongly convex w.r.t. the norm $\|\cdot\|$
if and only if
$f^*$ is $(1/{\mu})$-smooth w.r.t. the dual norm~$\|\cdot\|_*$.
\end{lemma}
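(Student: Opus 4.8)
The plan is to prove the two implications separately, with the common engine being \emph{subdifferential inversion}: for a closed convex $f$ one has $u \in \partial f(x) \iff x \in \partial f^*(u)$, a consequence of the Fenchel--Young equality $f(x) + f^*(u) = \langle u, x\rangle$. This lets me translate statements about $\partial f$ into statements about $\partial f^*$ and back. I will also use the two standard reformulations of the notions at hand: $\mu$-strong convexity of a closed convex $h$ is equivalent to strong monotonicity of its subdifferential, $\langle p - q,\, x - y\rangle \geq \mu\|x-y\|^2$ whenever $p \in \partial h(x)$, $q \in \partial h(y)$; and $L$-smoothness of a \emph{convex} function is equivalent to its gradient being $L$-Lipschitz (with the argument measured in the relevant norm and the gradient in the dual norm). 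I may also invoke the double-conjugate identity $f = f^{**}$ listed in the conjugate properties above.

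For the forward direction ($f$ is $\mu$-strongly convex $\Rightarrow f^*$ is $(1/\mu)$-smooth), I would first note that strong convexity forces superlinear growth of $f$, so $f^*$ is finite everywhere, and that the maximizer in the definition of the conjugate is unique, which makes $f^*$ differentiable with $\nabla f^*(u)$ equal to that maximizer; by subdifferential inversion the point $x_u := \nabla f^*(u)$ satisfies $u \in \partial f(x_u)$. Given $u_1, u_2$ with $x_i := \nabla f^*(u_i)$, I would write the strong convexity inequality for $f$ at $x_1$ and at $x_2$ using the subgradients $u_1, u_2$, add the two inequalities, and obtain $\langle u_1 - u_2,\, x_1 - x_2\rangle \geq \mu\|x_1 - x_2\|^2$. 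Combining this with H\"older's inequality $\langle u_1 - u_2,\, x_1 - x_2\rangle \leq \|u_1 - u_2\|_* \,\|x_1 - x_2\|$ gives $\|\nabla f^*(u_1) - \nabla f^*(u_2)\| \leq \tfrac{1}{\mu}\|u_1 - u_2\|_*$, i.e. $\nabla f^*$ is $(1/\mu)$-Lipschitz, which together with convexity yields the quadratic upper bound defining $(1/\mu)$-smoothness w.r.t. $\|\cdot\|_*$.

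For the reverse direction ($f^*$ is $(1/\mu)$-smooth $\Rightarrow f$ is $\mu$-strongly convex), I would exploit \emph{co-coercivity} of the gradient of the smooth convex function $h := f^*$, namely $\langle \nabla h(a) - \nabla h(b),\, a - b\rangle \geq \mu\|\nabla h(a) - \nabla h(b)\|^2$, derived from the quadratic upper bound by minimizing the shifted functions $z \mapsto h(z) - \langle \nabla h(a), z\rangle$ and $z \mapsto h(z) - \langle \nabla h(b), z\rangle$ and adding the resulting descent-lemma bounds. Then for arbitrary $x_1, x_2$ and subgradients $g_i \in \partial f(x_i)$, subdifferential inversion gives $x_i = \nabla h(g_i)$, so co-coercivity reads exactly $\langle g_1 - g_2,\, x_1 - x_2\rangle \geq \mu\|x_1 - x_2\|^2$, the strong-monotonicity characterization of $\mu$-strong convexity of $f$; since $f = f^{**}$ is closed and convex, nothing is lost in passing through the conjugate.

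The main obstacle I expect is the treatment of \emph{general, non-Hilbert norms} together with the possible nonsmoothness of $f$. The co-coercivity estimate in the reverse direction cannot be quoted from the Euclidean inner-product proof; it must be re-derived from the dual-norm quadratic upper bound via the minimization argument above, keeping careful track of which quantities are measured in $\|\cdot\|$ and which in $\|\cdot\|_*$. I would also have to verify the regularity claims---that strong convexity of $f$ makes $f^*$ finite and differentiable (essential smoothness), and that all estimates go through with subgradients rather than gradients wherever $f$ itself is nondifferentiable. The remaining ingredient, that an $L$-Lipschitz gradient of a convex function is equivalent to the quadratic upper bound, is a small self-contained lemma that I would dispatch separately.
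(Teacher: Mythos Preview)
The paper does not prove this lemma at all: it is stated with a direct citation to \cite[Theorem 6]{Kakade:2009wh} and then used as a black box (for instance in the proof of Lemma~\ref{lem:logisticlossconj}). There is therefore no ``paper's own proof'' to compare against; your proposal supplies what the paper deliberately outsources.

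As a proof in its own right, your plan is the standard one and is essentially correct. The forward direction via strong monotonicity of $\partial f$ combined with the dual-norm H\"older inequality is clean and goes through without issue. For the reverse direction you correctly flag the one genuine subtlety: co-coercivity of the gradient of a smooth convex function in a general normed space (where the gradient lives in the dual) does not follow from the Euclidean Baillon--Haddad argument verbatim, and must be re-derived from the descent lemma applied to the shifted functions, tracking which quantities carry $\|\cdot\|$ versus $\|\cdot\|_*$. One small point to be careful with: going from strong monotonicity of $\partial f$ back to the inequality in Definition~4 requires that $\partial f$ be nonempty on a dense enough set (the relative interior of $\dom f$ suffices for a closed convex $f$), and then extending by lower semicontinuity; you should make that passage explicit rather than leaving it implicit in ``nothing is lost.''
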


\section{Applications}
\label{sec:applications}

\subsection{$\Lone$ and General Non-Strongly Convex Regularizers}
\label{sec:l1}

$\Lone$ regularization is obtained in the objective~\eqref{eq:obj} by letting $\ell_i(\cdot) := \lambda |\cdot |$.
Primal-dual convergence can be obtained by using the modification introduced in Section~\ref{sec:convergence}, which will guarantee $L$-bounded support. Formally, we replace $\ell_i(\cdot) = |\cdot|$ by 
\[
\bar{\ell}(\alpha) :=
\begin{cases}
|\alpha| &: \alpha \in [-B,B] \\
+\infty &: \text{otherwise.}
\end{cases}
\]
For large enough $B$, this problem yields the same solution as the original $\Lone$-objective. We provide a detailed proof and description of this technique in Section~\ref{sec:appendix-conjugates}.
Note that this only affects convergence theory, in that it allows us to present a strong primal-dual rate (Theorem~\ref{thm:convergenceNonsmooth} for $L$=$B$).

\subsection{Elastic Net and General Strongly Convex Regularizers} 
\label{sec:elasticnet}
Another application we can consider is elastic net regularization, $
  \frac{\eta}{2} \norm{\alphav}_2^2 +  (1-\eta) \norm{\alphav}_1$,
for fixed parameter $\eta \in (0,1]$, which is obtained by setting $\ell_i(\alpha) := \lambda\big[ \frac{\eta}{2} \alpha^2 + (1-\eta) |\alpha| \big]$ in~\eqref{eq:obj}. For the special case $\eta=0$, we obtain the $\Lone$-norm. 
For elastic-net-regularized problems of the form \eqref{eq:obj}, Theorem~\ref{thm:convergenceSmooth} gives a global linear (geometric) convergence rate, since $\ell_i$ is $\eta$-strongly convex. This holds as long as the data-fit function is smooth (see Section \ref{sec:datafit}), and directly yields a primal-dual algorithm and corresponding rate.

\vspace{-1mm}
\subsection{Local Solvers for $\Lone$ and Elastic Net}
For the $\Lone$-regularizer in the primal setting, the local subproblem \eqref{eq:subproblem} becomes a simple quadratic problem on the local data, with regularization applied only to local variables $\vsubset{\alphav}{k}$. Therefore, existing fast $\Lone$-solvers for the single-machine case, such as \glmnet variants~\cite{Friedman:2010wm} or \textsc{blitz}~\cite{Johnson:2015tq} can be directly applied to each local subproblem $\Ggk(\,\cdot\,; \vv, \vsubset{\alphav}{k})$ within Algorithm~\ref{alg:generalizedcocoa}. The sparsity induced on the subproblem solutions of each machine naturally translates into the sparsity of the global solution, since the local variables $\vsubset{\alphav}{k}$ will be concatenated.

In terms of the approximation quality parameter $\Theta$ for the local problems (Assumption~\ref{asm:theta}), we can apply existing recent convergence results from the single machine case. For example, for randomized coordinate descent (as part of \glmnet), \citepsup[Theorem 1]{Lu:2013tl} gives a $O(1/t)$ approximation quality for any separable regularizer, including $\Lone$ and elastic net; see also~\cite{Tappenden:2015vha,ShalevShwartz:2011vo}.

\subsection{Smooth Data-Fit Functions}\label{sec:datafit}

To illustrate the role of $f$ as a smooth data-fit function in this section---contrasting with its role as a regularizer in traditional \cocoap as we discuss in Section \ref{sec:oldcocoa}---we consider the following examples.

\vspace{-.5em}
\paragraph{Least squares loss.}
Let $\bv \in \R^d$ be labels or response values, and consider the least squares objective $f(\vv) := \frac12 \|\vv - \bv\|_2^2$, which is $1$-smooth. We obtain the familiar least-squares regression objective in our optimization problem \eqref{eq:obj}, using
\[
f(A\alphav) := \textstyle\frac{1}{2}\| A\alphav - \bv\|_2^2 \, .
\] 
Observing that the gradient of $f$ is $\nabla f(\vv)=\vv-\bv$, the dual-to-primal mapping is given by: $\wv(\alphav)$ $:=$ $\nabla f( \vv(\alphav) )$ $=$ $A \alphav -\bv$, which is well known as the \textit{residual vector} in least-squares regression.

\vspace{-.5em}
\paragraph{Logistic regression loss.}
For classification problems, we consider a logistic regression model with $d$ training examples $\yv_j\in\R^n$ for $j\in[d]$ collected as the rows of the data matrix $A$.
For each training example, we are given a binary label, which we collect in the vector $\bv \in \{-1,1\}^d$. 
Formally, the objective is defined as $f(\vv) := \sum_{j=1}^d \log{(1 + \exp{(-b_j v_j)})}$, which is again a separable function.
The classifier loss is given by
\vspace{-3mm}
\begin{equation}
\label{eq:logisticlossprimal}
f(A\alphav) := \sum_{j=1}^d \log{(1 + \exp{(-b_j \yv_j^T \alphav)})} \, , 
\end{equation}
where $\alphav\in\R^n$ is the parameter vector. It is not hard to show that $f$ is $1$-smooth if the labels satisfy $b_j\in[-1,1]$; see e.g. Lemma~\ref{lem:logisticlossconj} below.
The primal-dual mapping $\wv(\alphav) := \nabla f( \vv(\alphav) ) =\nabla f( A \alphav )$ is given by
$
w_j(\alphav) 
= \frac{-b_j}{1 + \exp{(b_j \yv_j^T \alphav)}} \, .$  

\section{Proofs of Primal-Dual Relationship}

In the following subsections we provide derivations of the primal-dual relationship of the general objectives~\eqref{eq:obj} and \eqref{eq:dualP}, and then show how to derive this primal-dual setup for various applications.

\subsection{Primal-Dual Relationship}
The relation of our original formulation \eqref{eq:obj} to its dual formulation \eqref{eq:dualP} is standard in convex analysis, and is a special case of the concept of Fenchel Duality. 
Using the combination with the linear map $A$ as in our case, the relationship is called Fenchel-Rockafellar Duality, see e.g. \citep[Theorem 4.4.2]{Borwein:2005ge} or \citepsup[Proposition 15.18]{Bauschke:2011ik}. For completeness, we illustrate this correspondence with a self-contained derivation of the duality.

Starting with the original formulation \eqref{eq:obj}, we introduce a helper variable vector $\vv\in \R^d$ representing $\vv =A\alphav$. Then optimization problem~\eqref{eq:obj} becomes:
\begin{equation}
\label{eq:constrainedprimal}
\min_{\alphav \in\R^n} \quad  f(\vv) + g( \alphav) \quad \text{such that} \ \vv =A\alphav \, .
\end{equation}
Introducing Lagrange multipliers $\wv \in \R^d$,  the Lagrangian is given by:
\[
L(\alphav, \vv; \wv) := f(\vv) +   g(\alphav) + \wv^T\left(A\alphav-\vv\right) \, .
\]
The dual problem of \eqref{eq:obj} follows by taking the infimum with respect to both $\alphav$ and $\vv$:
\begin{align}
\inf_{\alphav, \vv} L(\wv, \alphav, \vv) & =   \inf_{\vv} \left\{ f(\vv) - \wv^T \vv \right\} + \inf_{\alphav} \left\{ g(\alphav) +  \wv^T A\alphav\right\} \notag \\
& =  - \sup_{\vv} \left\{  \wv^T \vv - f(\vv) \right\}- \sup_{\alphav} \left\{(-\wv^T A)\alphav -  g(\alphav) \right\} \notag\\
& = - f^*(\wv) - g^*(-A^T \wv) \label{eq:Lagrangian}\, .
\end{align}
We change signs and turn the  maximization of the dual problem \eqref{eq:Lagrangian} into a minimization and thus we arrive at the dual formulation $\eqref{eq:dualP}$ as claimed:
\[
    \min_{\wv \in \R^d} \quad \Big[ \ 
    \bP(\wv) := f^*(\wv) + g^*(-A^T \wv) \ \Big] \, .
\]

\subsection{Conjugates and Smoothness of $f$-Functions of Interest}

\begin{lemma}[Conjugate and Smoothness of the Logistic Loss]
\label{lem:logisticlossconj}
The logistic classifier loss function
\begin{equation*}
f(A\alphav) := \sum_{j=1}^d \log{(1 + \exp{(-b_j \yv_j^T \alphav)})} 
\vspace{-1mm}
\end{equation*}
(see also \eqref{eq:logisticlossprimal} above) is the conjugate of $f^*$, where
\begin{equation}
\label{eq:logisticlossconj}
f^*(\wv) := \sum_{j=1}^d \big((1+w_j b_j) \log{(1+w_j b_j)} - w_j b_j\log{(-w_j b_j)\big)} \, ,
\end{equation}
with the box constraint $-w_jb_j \in [0,1]$.

Furthermore, $f^*(\wv)$ is $1$-strongly convex over its domain if the labels satisfy $b_j\in[-1,1]$.
\end{lemma}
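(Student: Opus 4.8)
The plan is to reduce the statement to a one-dimensional computation and then apply the conjugate calculus collected above (separable sums, argument scaling, double conjugate). Write $\psi(u):=\log(1+e^u)$ for the softplus function and $\phi_j(v):=\psi(-b_j v)$, so that $f(\vv)=\sum_{j=1}^d \phi_j(v_j)$. By the ``conjugate of a separable sum'' rule, $f^*(\wv)=\sum_{j=1}^d \phi_j^*(w_j)$, so the asserted formula for $f^*$ reduces to two scalar facts: (i) $\psi^*(s)=s\log s+(1-s)\log(1-s)$ with $\dom\psi^*=[0,1]$ and the convention $0\log 0:=0$; and (ii) an application of argument scaling to pass from $\psi$ to $\phi_j$.

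For (i) I would compute $\psi^*(s)=\sup_{u\in\R}\{su-\log(1+e^u)\}$ by first-order optimality: setting $s-e^u/(1+e^u)=0$ gives $e^u=s/(1-s)$, which is solvable precisely for $s\in(0,1)$; substituting back yields $\psi^*(s)=s\log s+(1-s)\log(1-s)$, the endpoints $s\in\{0,1\}$ follow by continuity (value $0$), and $\psi^*(s)=+\infty$ for $s\notin[0,1]$ since the supremum diverges. For (ii), for binary labels $b_j\in\{-1,1\}$ we have $1/b_j=b_j$, so the argument-scaling rule ($g(v)=\psi(\alpha v)\Rightarrow g^*(w)=\psi^*(w/\alpha)$ with $\alpha=-b_j$) gives $\phi_j^*(w)=\psi^*(-w_j b_j)$; expanding this with $s=-w_jb_j\in[0,1]$ reproduces exactly $(1+w_jb_j)\log(1+w_jb_j)-w_jb_j\log(-w_jb_j)$ together with the box constraint $-w_jb_j\in[0,1]$. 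Summing over $j$ identifies $f^*$ with the displayed function. Since $\psi$ --- hence each $\phi_j$ and the sum $f$ --- is a finite, continuous, convex function, $f$ is closed and convex, so the double-conjugate property gives $(f^*)^*=f$; that is, $f$ is the conjugate of $f^*$, as claimed.

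For the $1$-strong convexity of $f^*$ over its domain I would argue on the primal side and invoke Lemma~\ref{lem:dualSmooth}. By separability it is enough to bound the curvature of one summand: $\psi''(u)=e^u/(1+e^u)^2=\sigma(u)(1-\sigma(u))\le\tfrac14$, where $\sigma$ is the logistic function, so $\phi_j''(v)=b_j^2\,\psi''(-b_j v)\le b_j^2/4\le\tfrac14\le 1$ whenever $|b_j|\le 1$. Hence $f$ is $1$-smooth under the hypothesis $b_j\in[-1,1]$, and applying Lemma~\ref{lem:dualSmooth} to $h:=f^*$ --- whose conjugate $h^*=f$ is $1$-smooth --- shows that $f^*=h$ is $1$-strongly convex over its domain. (As a consistency check one may instead compute directly: $(\psi^*)''(s)=\tfrac1s+\tfrac1{1-s}=\tfrac1{s(1-s)}\ge 4$ on $(0,1)$, and $\phi_j^*(w)=\psi^*(-w/b_j)$ carries a factor $1/b_j^2\ge 1$.)

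I expect the only delicate points to be bookkeeping rather than conceptual: tracking the sign and reciprocal in argument scaling (the tidy identity $1/b_j=b_j$ is used only for $b_j\in\{-1,1\}$, while the $1$-smoothness of $f$, and hence the $1$-strong convexity of $f^*$, requires merely $|b_j|\le 1$); handling the convention $0\log 0=0$ and the fact that the supremum defining $\psi^*$ is not attained at $s\in\{0,1\}$; and phrasing ``strongly convex over its domain'' via the subgradient form of strong convexity at the boundary $-w_jb_j\in\{0,1\}$, where $f^*$ has infinite curvature and the defining inequality holds trivially. Everything else is the routine one-dimensional calculus above.
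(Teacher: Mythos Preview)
Your plan is correct and follows essentially the same route as the paper: reduce by separability to a one-dimensional conjugate, identify the softplus/entropy conjugate pair, and then deduce $1$-strong convexity of $f^*$ from $1$-smoothness of $f$ via Lemma~\ref{lem:dualSmooth} using the second-derivative bound on the logistic loss. The paper simply cites \cite{ShalevShwartz:2013wl} for the scalar conjugate pair where you compute it explicitly via first-order optimality and argument scaling, and the paper uses the cruder bound $h''\le 1$ where you observe the sharper $\psi''\le\tfrac14$; both suffice for the stated $1$-smoothness conclusion. Your remark that the clean conjugate formula relies on $1/b_j=b_j$ (i.e.\ $b_j\in\{-1,1\}$), while the smoothness/strong-convexity claim only needs $|b_j|\le 1$, is a correct piece of bookkeeping that the paper leaves implicit.
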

\begin{proof}[Proof of Lemma \ref{lem:logisticlossconj}]
By separability of $f^*$, the conjugate of $f^*(\vv) = \sum_j \phi^*_j(v_j)$ is $f(\wv) = \sum_j \phi_j(w_j)$. For the losses, the conjugate pairs are $\phi_j(u) = \log(1+\exp(-b_ju))$, and $\phi^*_j (w_j) = -w_jb_j \log(-w_jb_j)+ (1+w_jb_j)\log(1+w_jb_j)$ with $-w_jb_j \in [0,1]$, see e.g. \citep[Page 577]{ShalevShwartz:2013wl}.

For the strong convexity, we show 1-strong smoothness of the conjugate $f(\vv) := \sum_{j=1}^d \log{(1 + \exp{(-b_j v_j)})} = \sum_{j=1}^d h(b_j v_j)$, which is an equivalent property, see Lemma \ref{lem:dualSmooth}.
Using the second derivative $h''(a)=\frac{e^{-a}}{(1+e^{-a})^2} \le 1$ of the function $h(a)=\log(1+e^{-a})$, we have that 
$\nabla^2 f(\vv) 
= \diag\big( (h''(b_j v_j) b_j^2)_j \big) 
= \diag\big( (\frac{e^{-b_j v_j}}{(1+e^{-b_j v_j})^2} b_j^2)_j \big) \le 1$, so $f(\vv)$ is 1-smooth w.r.t. the Euclidean norm.
\end{proof}

\subsection{Conjugates of Common Regularizers}
\label{sec:appendix-conjugates}

\begin{lemma}[Conjugate of the Elastic Net Regularizer]
\label{lem:elasticnetconjugate}
For $\eta \in (0,1]$, the elastic net function $\ell_i(\alpha) := \frac{\eta}{2} \alpha^2 + (1-\eta) |\alpha|$ is the convex conjugate of 
\[
    \ell^*_i(x) := \textstyle\frac1{2\eta} \big(\big[|x|-(1-\eta)\big]_+\big)^2 ,
\]
where $[.]_+$ is the positive part operator, $[s]_+ = s$ for $s>0$, and zero otherwise.
Furthermore, this $\ell^*$ is smooth, i.e. has Lipschitz continuous gradient with constant $1/\eta$.
\end{lemma}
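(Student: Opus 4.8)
The plan is to compute the conjugate $\ell_i^*$ in closed form and then read off its smoothness constant. I would decompose $\ell_i(\alpha) = \frac{\eta}{2}\alpha^2 + (1-\eta)|\alpha|$ as a sum $q + r$ of two closed convex functions, where $q(\alpha) := \frac{\eta}{2}\alpha^2$ and $r(\alpha) := (1-\eta)|\alpha|$. By the value-scaling rule for conjugates stated above, $q^*(x) = \frac{1}{2\eta}x^2$, and $r^*$ is the indicator function $\iota_{[-(1-\eta),\,1-\eta]}$ of the interval (the conjugate of $|\cdot|$ is $\iota_{[-1,1]}$, rescaled). Since $q$ is finite everywhere, the conjugate of the sum equals the infimal convolution of the conjugates, so
\[
\ell_i^*(x) \;=\; \inf_{|y|\le 1-\eta} \tfrac{1}{2\eta}(x-y)^2 \;=\; \tfrac{1}{2\eta}\,\mathrm{dist}\!\big(x,\,[-(1-\eta),1-\eta]\big)^2,
\]
and evaluating the distance to the interval gives exactly $\frac{1}{2\eta}\big([|x|-(1-\eta)]_+\big)^2$, as claimed. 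As a self-contained alternative I would instead compute $\ell_i^*(x) = \sup_\alpha\{x\alpha - \frac{\eta}{2}\alpha^2 - (1-\eta)|\alpha|\}$ directly: using the symmetry $\ell_i^*(-x) = \ell_i^*(x)$ I can assume $x \ge 0$, observe that the supremum is then attained at some $\alpha \ge 0$, and maximize the concave quadratic $(x-(1-\eta))\alpha - \frac{\eta}{2}\alpha^2$ over $\alpha \ge 0$; its unconstrained maximizer $\alpha^\star = (x-(1-\eta))/\eta$ is feasible precisely when $x \ge 1-\eta$, which produces the two branches of $[\,\cdot\,]_+$.

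For the smoothness claim the plan is to differentiate the closed form and show $\nabla\ell_i^*$ is $(1/\eta)$-Lipschitz. Writing $\Pi_C$ for the Euclidean projection onto the interval $C := [-(1-\eta),1-\eta]$, the formula yields $\nabla\ell_i^*(x) = \frac{1}{\eta}\big(x - \Pi_C(x)\big)$, and $x \mapsto x - \Pi_C(x)$ is $1$-Lipschitz since it is firmly nonexpansive; hence $\ell_i^*$ is $(1/\eta)$-smooth. A shorter route is to invoke Lemma~\ref{lem:dualSmooth}: $\ell_i$ is $\eta$-strongly convex, because its quadratic part contributes strong convexity $\eta$ and adding the convex term $(1-\eta)|\alpha|$ preserves this, so its conjugate is automatically $(1/\eta)$-smooth.

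The only place I expect to need genuine care is at the kink points — the non-differentiable point $\alpha = 0$ of $\ell_i$ in the direct supremum computation, and the points $|x| = 1-\eta$ of $\ell_i^*$, where I would verify that the one-sided expressions for $\ell_i^*$ and $\nabla \ell_i^*$ match up (the outer square kills the kink to first order, so differentiability genuinely holds there). Everything else is a routine one-dimensional calculation.
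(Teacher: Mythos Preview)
Your proposal is correct. The paper's own proof takes exactly your ``self-contained alternative'' route: it computes $\ell^*(x)=\sup_\alpha\{x\alpha-\tfrac{\eta}{2}\alpha^2-(1-\eta)|\alpha|\}$ by a case split on the sign of the maximizer, obtains $\alpha^\star=(x-(1-\eta))/\eta$ for $x\ge 1-\eta$ (and symmetrically for $x\le -(1-\eta)$), and then argues that $\ell^*$ vanishes on the middle interval by convexity and nonnegativity; for smoothness it simply inspects the derivative.

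Your primary route via infimal convolution is a genuinely different and somewhat cleaner argument. Writing $\ell_i=q+r$ and using $(q+r)^*=q^*\,\Box\,r^*$ immediately yields $\ell_i^*(x)=\tfrac{1}{2\eta}\,\mathrm{dist}(x,[-(1-\eta),1-\eta])^2$, which packages the three-case analysis into a single formula and, as you note, hands you the gradient $\tfrac{1}{\eta}(x-\Pi_C(x))$ and its Lipschitz constant for free via nonexpansiveness of the projection. It also generalizes painlessly (e.g.\ to vector-valued elastic net or other strongly convex $q$). The paper's direct computation is more elementary in that it uses nothing beyond first-year calculus, at the cost of the explicit casework. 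Your fallback to Lemma~\ref{lem:dualSmooth} for smoothness is also tighter than the paper's one-line ``consider the derivative'': it makes the constant $1/\eta$ a consequence of the $\eta$-strong convexity of $\ell_i$ rather than an ad hoc inspection.
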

\begin{proof}
We start by applying the definition of convex conjugate, that is:
\[\textstyle 
\ell^*(x) = \max_{\alpha \in \R} \left[ x\alpha - \eta \frac{\alpha^2}{2} - (1-\eta)|\alpha| \right] \, .
\]

We now distinguish two cases for the optimal: $\alpha^{\star} \geq 0$, $\alpha^{\star} < 0$.
For the first case we get that
\[\textstyle 
\ell^*(x) = \max_{\alpha \in \R} \left[ x\alpha - \eta \frac{\alpha^2}{2} - (1-\eta)\alpha \right] \, .
\]
Setting the derivative to $0$ we get $\alpha^{\star} = \frac{x-(1-\eta)}{\eta}$. To satisfy $\alpha^{\star} \geq 0$, we must have $x \geq 1-\eta$.
Replacing with $\alpha^{\star}$ we thus get:
\[\textstyle 
\ell^*(x) = \alpha^{\star} (x - \frac{1}{2}\eta \alpha^{\star} -(1-\eta)) = \alpha^{\star} \left( x - \frac{1}{2} (x-(1-\eta)) - (1-\eta) \right) =
\]
\[\textstyle 
\frac{1}{2} \alpha^{\star} \left( x - (1-\eta) \right) = \frac{1}{2\eta} \left( x - (1-\eta) \right)^2 \, .
\]
Similarly we can show that for $x \leq -(1-\eta)$
\[\textstyle 
\ell^*(x) = \frac{1}{2\eta} \left( x + (1-\eta) \right)^2 \, .
\]
Finally, by the fact that $\ell^*(.)$ is convex, always positive, and $\ell^*(-(1-\eta)) = \ell^*(1-\eta) = 0$,
it follows that $\ell^*(x) = 0$ for every $x \in [-(1-\eta),1-\eta]$.

For the smoothness properties, we consider the derivative of this function $\ell^*(x)$ and see that $\ell^*(x)$ is smooth, i.e. has Lipschitz continuous gradient with constant $1/\eta$, assuming $\eta>0$.
\end{proof}

\paragraph{Continuous conjugate modification for indicator functions.} To apply the theoretical convergence result from Theorem \ref{thm:convergenceNonsmooth} to objectives with $L_1$ norms, we modify the function $|\cdot|$ by imposing an additional constraint. Consider replacing $\ell_i(\cdot) = |\cdot|$ by
\[
\bar{\ell}(\alpha) :=
\begin{cases}
|\alpha| &: \alpha \in [-B,B] \\
+\infty &: \text{otherwise.}
\end{cases}
\]
With this modified $\Lone$-regularizer, the optimization problem \eqref{eq:obj} with regularization parameter $\lambda$ becomes
\begin{equation}
\label{eq:surrogatel1regproblem}
\min_{\alphav \in \R^{n}} \ f(A \alphav)  + \lambda \sum_{i = 1}^n \bar{\ell}(\alpha_i) \, .
\end{equation}
For large enough choice of the value $B$, this problems yields the same solution as the original objective:

\vspace{-2em}
\begin{equation}
    \label{eq:scaledl1}
    \min_{\alphav \in \R^{n}} \quad \Big[ \
    \bD(\alphav) := \ f(A \alphav)  + \lambda \sum_{i=1}^{n} |\alpha_i| \ \Big] \, .
\end{equation}

As we can see, the $\bar{\ell}$ is nothing more than a constrained version of the absolute value to the interval $[-B,B]$.
Therefore by setting $B$ to a large enough value that the interesting values of $\alpha_i$ will never reach,
we can have continuous $\bar{\ell}^*$ and at the same time make \eqref{eq:surrogatel1regproblem} equivalent to \eqref{eq:scaledl1}.

Formally, a simple way to obtain a large enough value of $B$, so that all solutions of \eqref{eq:scaledl1} are unaffected is the following. Note that we start the algorithm at $\alphav = \0$.
For every solution encountered during the execution of the algorithm, the objective values should never become worse than ${\bD}(\0)$. In other words, we restrict the ${\bD}(\cdot)$ optimization problem to the level set given by the initial starting value. Formally, this means that for every $i$, we will always require:
\[
\lambda |\alpha_i| \leq {f(\0)=\bD(\0)}  \implies |\alpha_i| \leq \frac{f(\0)}{\lambda}.
\]
(Note that $f(\alphav) \ge 0$ holds without loss of generality). We can thus set the value of $B$ to be $\frac{f(\0)}{\lambda }$.

\vspace{1em}
\begin{lemma}[Conjugate of the modified $\Lone$-norm]
\label{lem:l1surrogate}
The convex conjugate of $\bar{\ell}_i$ as defined above is
\[
    \bar{\ell}^*(x) = 
    \begin{cases}
            0 & : x \in [-1,1]  \\
            B(|x| - 1) & : \text{otherwise,}
        \end{cases}
\]
and is $B$-Lipschitz.
\end{lemma}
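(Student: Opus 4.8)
The plan is to compute the convex conjugate $\bar{\ell}^*(x) = \max_{\alpha \in [-B,B]} \big( x\alpha - |\alpha| \big)$ directly from the definition, exploiting that the modified regularizer $\bar{\ell}$ differs from $|\cdot|$ only by restricting the domain to $[-B,B]$. First I would note that the objective $x\alpha - |\alpha|$ inside the maximum is a piecewise-linear function of $\alpha$: on $\alpha \ge 0$ it equals $(x-1)\alpha$, and on $\alpha \le 0$ it equals $(x+1)\alpha$. By symmetry (replacing $\alpha \mapsto -\alpha$ sends $x \mapsto -x$), it suffices to analyze $x \ge 0$, where the maximum over $\alpha \le 0$ of $(x+1)\alpha$ is attained at $\alpha = 0$ with value $0$, so the conjugate reduces to $\max_{0 \le \alpha \le B} (x-1)\alpha$.

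From there the case split is immediate: if $x \in [0,1]$ the coefficient $x-1 \le 0$, so the maximum is at $\alpha = 0$ giving value $0$; if $x > 1$ the coefficient is positive, so the maximum is at the boundary $\alpha = B$, giving value $B(x-1)$. Combining with the symmetric case $x < 0$ yields exactly the claimed formula $\bar{\ell}^*(x) = 0$ for $x \in [-1,1]$ and $\bar{\ell}^*(x) = B(|x|-1)$ otherwise. For the Lipschitz claim, I would simply observe that $\bar{\ell}^*$ is continuous, piecewise-affine, with slopes in $\{-B, 0, B\}$, hence globally $B$-Lipschitz; alternatively one can invoke Lemma~\ref{lem:dualLipschitz}, since $\bar{\ell}$ has $B$-bounded support by construction, its conjugate is $B$-Lipschitz.

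This proof is essentially a routine one-variable optimization, so I do not anticipate a genuine obstacle; the only thing to be careful about is the boundary behavior at $x = \pm 1$, where one should check that both expressions agree (they give $0$) so that the piecewise definition is consistent and continuous, and at the endpoints $\alpha = \pm B$, where one should confirm the maximum is genuinely attained there rather than in the interior when $|x| > 1$. I would also remark, for completeness, that this is the conjugate of $\bar{\ell}$ corresponding to $\ell_i = |\cdot|$; for the scaled version $\ell_i = \lambda|\cdot|$ one applies the value-scaling rule for conjugates listed in Section~\ref{sec:conjugates}, though the lemma as stated concerns the unscaled case and needs no such adjustment.
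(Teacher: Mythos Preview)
Your proposal is correct, and it takes a genuinely different route from the paper. You compute $\bar{\ell}^*(x) = \max_{\alpha \in [-B,B]}(x\alpha - |\alpha|)$ directly by exploiting the piecewise-linear structure in $\alpha$. The paper instead works in the opposite direction: it starts from the claimed piecewise formula for $\bar{\ell}^*$ and verifies that its conjugate equals $\bar{\ell}$, i.e., it checks $\sup_x [\alpha x - \bar{\ell}^*(x)] = \bar{\ell}(\alpha)$ by cases on $\alpha$. That approach implicitly relies on both functions being closed convex so that the biconjugate identity $(\bar{\ell}^*)^{**} = \bar{\ell}^*$ closes the loop, a point the paper does not make explicit. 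Your direct computation avoids this subtlety and is arguably cleaner. You also explicitly address the $B$-Lipschitz claim (via slopes or via Lemma~\ref{lem:dualLipschitz}), whereas the paper's proof does not treat that part at all.
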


\begin{proof}
We start by applying the definition of convex conjugate:
\[
\bar{\ell}(\alpha) = \sup_{x \in \R} \left[ \alpha x - \bar{\ell}^*(x) \right] \, .
\]
We begin by looking at the case in which $\alpha \geq B$; in this case it's easy to see that when $x \to +\infty$, we
have:
\[
\alpha x - B(|x|-1) = (\alpha - B)x - B \to +\infty
\]
as $\alpha - B \geq 0$. The case $\alpha \leq -B$ holds analogously.
We'll now look at the case $\alpha \in [0,B]$; in this case it is clear we must have $x^{\star} \ge 0$.
 It also must hold that $x^{\star} \leq 1$, since
\[
\alpha x - B(x-1) < \alpha x
\]
for every $x > 1$. Therefore the maximization becomes

\[
\bar{\ell}(\alpha) = \sup_{x \in [0,1]} \alpha x \, ,
\]
which has maximum $\alpha$ at $x = 1$.  The remaining $\alpha \in [-B,0]$ case follows in similar fashion.
\end{proof}

\section{Convergence Proofs}
\label{sec:convergenceproofs}
In this section we provide proofs of our main convergence results. The results are motivated by \cite{Ma:2015ti}, but where we have significantly generalized the problem of interest, and where we derive separate meaning by applying the problem directly to~\eqref{eq:obj}. We provide full details of Lemma~\ref{lem:RelationOfDTOSubproblems} as a proof of concept, but omit details in later proofs that can be derived using the arguments in \cite{Ma:2015ti} or earlier work of \cite{ShalevShwartz:2013wl}, and instead outline the proof strategy and highlight sections where the theory deviates.

\subsection{Approximation of $\bD(\cdot)$ by the Local Subproblems $\Ggk(\cdot)$}
We begin with a definition of the data-dependent aggregation parameter for \proxcocoa, $\sigma'$, which we will use in the throughout our convergence results.

\begin{definition}[Data-dependent aggregation parameter]
In Algorithm~\ref{alg:generalizedcocoa}, the aggregation parameter~$\gamma$ controls the level of adding ($\gamma:=1$) versus averaging ($\gamma:=\tfrac{1}{K}$) of the partial solutions from all machines.
For the convergence results discussed below to hold, the subproblem parameter $\sigma'$ must be chosen not smaller than
\begin{equation}
\label{eq:sigmaPrimeSafeDefinition} 
\sigma'
\geq
\sigma'_{min}
 \eqdef
 \aggpar
 \max_{\alphav\in \R^n}
 \frac{
 \|A\alphav\|^2}{\sum_{k=1}^K \|A\vsubset{\alphav}{k}\|^2} \, . \vspace{-1mm}
\end{equation}
\end{definition}

The simple choice of $\sigma' := \gamma K$ is valid for \eqref{eq:sigmaPrimeSafeDefinition}, i.e., 

\vspace{-1.5em}
\[
\gamma K \geq \sigma'_{min} \, .
\] In some cases, it will be possible to give better (data-dependent) choices for $\sigma'$, closer to the actual bound given in $\sigma'_{min}$.

Our first lemma in the overall proof of convergence helps to relate change in local subproblems to the global objective $\bD(\cdot)$.
\begin{lemma}
\label{lem:RelationOfDTOSubproblems}
For any dual
$\alphav, \Delta \alphav \in \R^n$, $\vv = \vv(\alphav) := A\alphav$, and real values $\aggpar,\sigma'$ satisfying~\eqref{eq:sigmaPrimeSafeDefinition}, it holds that
\begin{equation}
  \bD\Big(
\alphav +\aggpar 
\sum_{k=1}^K
\vsubset{\Delta \alphav}{k}\!
\Big) 
 \leq 
 (1-\aggpar) \bD(\alphav)  + \aggpar 
 \sum_{k=1}^K 
 \Ggk(\vsubset{\Delta \alphav}{k}; \vv, \vsubset{\alphav}{k}) \, .
\end{equation}
\end{lemma}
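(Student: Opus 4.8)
The plan is to expand the left-hand side directly from the definition $\bD(\alphav) = f(A\alphav) + \sum_{i=1}^n \ell_i(\alpha_i)$ and bound its two constituent pieces — the smooth data-fit term $f(A\,\cdot\,)$ and the separable regularizer $\sum_i \ell_i(\cdot)$ — independently, then reassemble the bound into exactly $\sum_{k=1}^K \Ggk$ using the definition \eqref{eq:subproblem}. Writing $\Delta \alphav := \sum_{k=1}^K \vsubset{\Delta \alphav}{k}$ and using linearity of $A$, note that $A\big(\alphav + \aggpar \Delta\alphav\big) = \vv + \aggpar \sum_{k=1}^K A\vsubset{\Delta \alphav}{k}$, so the claim is just a term-by-term comparison once each piece is bounded.

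First I would handle the regularizer. Since $\{\mathcal{P}_k\}_{k=1}^K$ is a disjoint cover of $[n]$ and each $\vsubset{\Delta \alphav}{k}$ is supported on $\mathcal{P}_k$, every coordinate $i$ belongs to a unique block $k(i)$, and there $(\aggpar \Delta\alphav)_i = \aggpar (\vsubset{\Delta \alphav}{k(i)})_i$. Writing $\alpha_i + \aggpar(\vsubset{\Delta \alphav}{k(i)})_i = (1-\aggpar)\alpha_i + \aggpar\big(\alpha_i + (\vsubset{\Delta \alphav}{k(i)})_i\big)$ and invoking convexity of $\ell_i$ with $\aggpar \in (0,1]$ gives $\ell_i\big((\alphav+\aggpar\Delta\alphav)_i\big) \le (1-\aggpar)\ell_i(\alpha_i) + \aggpar\,\ell_i\big(\alpha_i + (\vsubset{\Delta \alphav}{k(i)})_i\big)$; summing over $i$ yields $\sum_i \ell_i\big((\alphav+\aggpar\Delta\alphav)_i\big) \le (1-\aggpar)\sum_i \ell_i(\alpha_i) + \aggpar \sum_{k=1}^K \sum_{i \in \mathcal{P}_k} \ell_i\big(\alpha_i + (\vsubset{\Delta \alphav}{k})_i\big)$, which is precisely $(1-\aggpar)$ times the $g$-part of $\bD(\alphav)$ plus $\aggpar$ times the regularizer parts of the $\Ggk$'s.

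Next I would handle $f$. By $(1/\tau)$-smoothness \eqref{eq:smooth} applied at $\vv$ with increment $\aggpar \sum_k A\vsubset{\Delta \alphav}{k}$, we get $f\big(\vv + \aggpar \sum_k A\vsubset{\Delta \alphav}{k}\big) \le f(\vv) + \aggpar\, \nabla f(\vv)^T \sum_k A\vsubset{\Delta \alphav}{k} + \frac{\aggpar^2}{2\tau}\big\|\sum_k A\vsubset{\Delta \alphav}{k}\big\|^2$. The only nontrivial step is decoupling the quadratic term across machines: applying the definition of $\sigma'_{min}$ in \eqref{eq:sigmaPrimeSafeDefinition} to the vector $\Delta\alphav$ (whose $k$-th block is exactly $\vsubset{\Delta \alphav}{k}$) gives $\big\|\sum_k A\vsubset{\Delta \alphav}{k}\big\|^2 = \|A\Delta\alphav\|^2 \le \frac{\sigma'}{\aggpar}\sum_k \|A\vsubset{\Delta \alphav}{k}\|^2$, hence $\frac{\aggpar^2}{2\tau}\big\|\sum_k A\vsubset{\Delta \alphav}{k}\big\|^2 \le \frac{\aggpar\sigma'}{2\tau}\sum_k \|A\vsubset{\Delta \alphav}{k}\|^2$. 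Finally, rewriting $f(\vv) = (1-\aggpar)f(\vv) + \aggpar \sum_{k=1}^K \tfrac1K f(\vv)$ and the linear term as $\aggpar \sum_k \wv^T A\vsubset{\Delta \alphav}{k}$ with $\wv = \nabla f(\vv)$, I add the $f$-bound and the $g$-bound; the right-hand side then reads $(1-\aggpar)\bD(\alphav) + \aggpar \sum_{k=1}^K \Ggk(\vsubset{\Delta \alphav}{k};\vv,\vsubset{\alphav}{k})$, which is the claim. The main obstacle is essentially careful bookkeeping — tracking the support structure of the $\vsubset{\Delta \alphav}{k}$ in the convexity step and correctly invoking \eqref{eq:sigmaPrimeSafeDefinition} to pass from the squared norm of the aggregate update to the sum of per-block squared norms appearing in $\Ggk$.
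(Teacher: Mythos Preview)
Your proposal is correct and follows essentially the same route as the paper: split $\bD(\alphav+\aggpar\Delta\alphav)$ into the $f$-part and the $\sum_i\ell_i$-part, bound the former via $(1/\tau)$-smoothness together with the safety condition \eqref{eq:sigmaPrimeSafeDefinition} to decouple the squared norm across blocks, bound the latter by convexity (Jensen) of each $\ell_i$, and then regroup using $f(\vv)=(1-\aggpar)f(\vv)+\aggpar\sum_k\tfrac1K f(\vv)$ to recognize $(1-\aggpar)\bD(\alphav)+\aggpar\sum_k\Ggk$. The only cosmetic difference is that the paper treats $f$ first and the regularizer second, whereas you do the reverse.
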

\begin{proof}
\allowdisplaybreaks
In this proof we follow the line of reasoning in \citep[Lemma 4]{Ma:2015ti} with a more general $(1/\tau)$ smoothness assumption on $f(\cdot)$. An outer iteration of \proxcocoa performs the following update:
\begin{align}
\label{eq:dualtosubproblem1}
\bD(\alphav+\gamma \sum_{k=1}^K\vsubset{\Delta \alphav}{k})
&= \underbrace{ f(\vv(\alphav + \gamma \sum_{k=1}^K \vsubset{\Delta \alphav}{k}))}_A +
\underbrace{\sum_{i=1}^{n}
\ell_i(\alpha_i +\gamma (\sum_{k=1}^K \vsubset{\Delta \alphav}{k})_i)}_{B} \, .
\end{align}

We bound the terms $A$ and $B$ separately. First we bound A using $(1/\tau)$-smoothness of $f$:
\begin{align*}
A &= f\Big(\vv(\alphav + \gamma \sum_{k=1}^K \vsubset{\Delta \alphav}{k})\Big) =
f\Big(\vv(\alphav) + \gamma \sum_{k=1}^K \vv(\vsubset{\Delta \alphav}{k})\Big) \\
& \overset{\text{smoothness of $f$ as in \eqref{eq:smooth}}}{\leq} f(\vv(\alphav)) +\sum_{k=1}^K \gamma \nabla f(\vv(\alphav))^T  \vv(\vsubset{\Delta \alphav}{k})  + \frac{\gamma^2}{2\tau} \|\sum_{k=1}^K \vv(\vsubset{\alphav}{k})\|^2 \\
& \overset{\text{definition of $\wv$ as in \eqref{eq:dualPdualrelation}}}{\leq} f(\vv(\alphav)) +\sum_{k=1}^K \gamma \vv(\vsubset{\Delta \alphav}{k})^T\wv(\alphav)  + \frac{\gamma^2}{2\tau} \|\sum_{k=1}^K \vv(\vsubset{\alphav}{k})\|^2 \\
& \overset{\text{safe choice of $\sigma'$ as in \eqref{eq:sigmaPrimeSafeDefinition}}}{\leq} f(\vv(\alphav)) +\sum_{k=1}^K \gamma \vv(\vsubset{\Delta \alphav}{k})^T\wv(\alphav)  + \frac{1}{2\tau}\gamma \sigma' \sum_{k=1}^K \| \vv(\vsubset{\alphav}{k})\|^2 \, .
\end{align*}  
 
Next we use Jensen's inequality to bound B:
\begin{align*}
B = \sum_{k=1}^K \left( \sum_{i\in \mathcal{P}_k} \ell_i(\alpha_i+\gamma   (\vsubset{\Delta \alphav}{k})_i) \right)
& = \sum_{k=1}^K \left( \sum_{i\in \mathcal{P}_k} \ell_i((1-\gamma)   \alpha_i+\gamma (\alphav + \vsubset{\Delta \alphav}{k})_i) \right) \\
&\leq  \sum_{k=1}^K \left( \sum_{i\in \mathcal{P}_k} (1-\gamma) \ell_i(\alpha_i) +\gamma \ell_i(\alphav_i + {\vsubset{\Delta \alphav}{k}}_i) \right) \, .
\end{align*}

Plugging $A$ and $B$ back into~\eqref{eq:dualtosubproblem1} yields:
\begin{align*}
\bD\Big(\alphav +\gamma \sum_{k=1}^K \vsubset{\Delta \alphav}{k}\Big) 
\le \ &  f(\vv(\alphav)) \pm \gamma f(\vv(\alphav))
+\sum_{k=1}^K \gamma  \vv(\vsubset{\Delta \alphav}{k})^T\wv(\alphav)   + \frac{1}{2\tau}\gamma \sigma' \sum_{k=1}^K \| \vv(\vsubset{\alphav}{k})\|^2 \\
& +  \sum_{k=1}^K\sum_{i\in \mathcal{P}_k} (1-\gamma) \ell_i(\alpha_i) +\gamma \ell_i(\alphav_i + {\vsubset{\Delta \alphav}{k}}_i) \\
= \ & \underbrace{(1-\gamma) f(\vv(\alphav)) +  \sum_{k=1}^K \left(\sum_{i\in \mathcal{P}_k} (1-\gamma) \ell_i(\alpha_i)  \right)}_{(1-\gamma) \bD(\alphav)} \\
&  +  \gamma \sum_{k=1}^K \left(\frac{1}{K} f(\vv(\alphav)) + \vv(\vsubset{\Delta \alphav}{k})^T\wv(\alphav) + \frac{\sigma'}{2\tau}  \| \vv( \vsubset{\alphav}{k})\|^2 + \sum_{i\in \mathcal{P}_k} \ell_i(\alphav_i + {\vsubset{\Delta \alphav}{k}}_i)  \right) \\
\overset{\eqref{eq:subproblem}}{=} \ & (1-\gamma) \bD(\alphav) +\gamma \sum_{k=1}^K \Ggk(  \vsubset{\Delta \alphav}{k}; \vv) \, , 
\end{align*}
where the last equality is by the definition of the subproblem objective $\Ggk(.)$ as in \eqref{eq:subproblem}.
\end{proof}

\subsection{Proof of Main Convergence Result (Theorem \ref{thm:convergenceNonsmooth})}

Before proving the main convergence results, we introduce several useful quantities, including the the following lemma, which characterizes the effect of iterations of Algorithm~\ref{alg:generalizedcocoa} on the duality gap for any chosen local solver of approximation quality $\Theta$.

\begin{lemma}
\label{lem:basic}
Let $\ell_i$ be strongly\footnote{Note that the case of weakly convex $\ell_i(.)$ is explicitly allowed here as well, as the Lemma holds for the case $\mu = 0$.} convex with convexity parameter $\mu \geq 0$ with respect to the norm $\|\cdot\|$, $\forall i\in[n]$.
Then for all iterations~$t$ of Algorithm~\ref{alg:generalizedcocoa} under Assumption~\ref{asm:theta}, and any $s\in [0,1]$, it holds that
\begin{align}
\label{eq:lemma:dualdecrease_vs_dualitygap}
&\E[ \bD(\vc{\alphav}{t}) - \bD(\vc{\alphav}{t+1})] \geq \gamma (1-\Theta) \Big(s G(\vc{\alphav}{t}) - \frac{\sigma's^2}{2\tau} \vc{R}{t} \Big) \, ,
\end{align}
where
\begin{align}
\label{eq:defOfR}
\vc{R}{t}&:= - \tfrac{ \tau \mu (1-s)}{\sigma' s } \|\vc{\uv}{t}-\vc{\alphav}{t}\|^2 + \textstyle{\sum}_{k=1}^K \| A \vsubset{  (\vc{\uv}{t}  - \vc{\alphav}{t} )}{k}\|^2 \, ,
\end{align}
for $\vc{\uv}{t} \in\R^n$ with
\begin{equation}
\label{eq:defintionOfUi}
\vc{u_i}{t} 
\in \partial \ell^*_i(-\xv_i^T\wv(\vc{\alphav}{t})) \, .
\end{equation}
\end{lemma}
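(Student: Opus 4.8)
The plan is to combine Lemma~\ref{lem:RelationOfDTOSubproblems}, which controls the one-step decrease $\bD(\vc{\alphav}{t})-\bD(\vc{\alphav}{t+1})$ in terms of the subproblem objectives $\Ggk$, with the $\Theta$-approximation Assumption~\ref{asm:theta} and a suitable feasible point inside each $\Ggk$ to expose the duality gap. First I would apply Lemma~\ref{lem:RelationOfDTOSubproblems} with the iterate $\vsubset{\Delta \alphav}{k}$ returned by the local solver; taking expectations and using~\eqref{eq:localSolutionQuality} to pass from the approximate minimizer to the exact minimizer $\vsubset{\Delta \alphav^{\star}}{k}$ (at the cost of the factor $(1-\Theta)$), this gives
\begin{align*}
\E[\bD(\vc{\alphav}{t})-\bD(\vc{\alphav}{t+1})]
\geq \gamma(1-\Theta)\Big(\bD(\vc{\alphav}{t}) - \textstyle\sum_{k=1}^K \Ggk({\bf 0};\vv,\vsubset{\alphav}{k}) + \textstyle\sum_{k=1}^K \min_{\vsubset{\Delta\alphav}{k}} \big(\Ggk({\bf 0};\cdot) - \Ggk(\vsubset{\Delta\alphav}{k};\cdot)\big)\Big) \, .
\end{align*}
Observing that $\sum_k \Ggk({\bf 0};\vv,\vsubset{\alphav}{k}) = \bD(\vc{\alphav}{t})$ (from the interpretation identity with $\Delta\alphav=0$), the first two terms cancel, and the whole bound reduces to $\gamma(1-\Theta)\sum_k \big(\Ggk({\bf 0}) - \Ggk(\vsubset{\Delta\alphav}{k};\cdot)\big)$ evaluated at the minimizer, i.e.\ to $-\gamma(1-\Theta)\sum_k \big(\Ggk(\vsubset{\Delta\alphav^\star}{k};\cdot) - \Ggk({\bf 0};\cdot)\big)$.

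The core of the argument is then to lower-bound $\sum_k\big(\Ggk({\bf 0}) - \Ggk(\vsubset{\Delta\alphav}{k};\cdot)\big)$ by plugging in a cleverly chosen suboptimal feasible direction rather than the true minimizer $\vsubset{\Delta\alphav^\star}{k}$. The natural choice, following the SDCA/\cocoap lineage, is $\vsubset{\Delta\alphav}{k} := s\,\vsubset{(\vc{\uv}{t}-\vc{\alphav}{t})}{k}$ for $s\in[0,1]$, where $\vc{u_i}{t}\in\partial\ell^*_i(-\xv_i^T\wv(\vc{\alphav}{t}))$ as in~\eqref{eq:defintionOfUi}. Substituting this into the definition~\eqref{eq:subproblem} of $\Ggk$, I would expand the quadratic term $\frac{\sigma'}{2\tau}\|A\vsubset{\Delta\alphav}{k}\|^2 = \frac{\sigma' s^2}{2\tau}\|A\vsubset{(\uv-\alphav)}{k}\|^2$, the linear term $\wv^T A\vsubset{\Delta\alphav}{k}$, and use $\mu$-strong convexity of each $\ell_i$ (with $\mu=0$ permitted) together with the definition of convex conjugates to handle $\sum_{i\in\mathcal{P}_k}\big(\ell_i(\alpha_i) - \ell_i(\alpha_i + s(u_i-\alpha_i))\big)$. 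The standard computation here (Jensen/strong convexity on the $\ell_i$ term, plus the Fenchel–Young equality $\ell_i(u_i) + \ell^*_i(-\xv_i^T\wv) = -u_i\xv_i^T\wv$ at the point $u_i$) rearranges exactly into $s$ times the per-coordinate duality gap contribution minus the quadratic correction, and summing over $k$ yields precisely the claimed form: $sG(\vc{\alphav}{t}) - \frac{\sigma's^2}{2\tau}\vc{R}{t}$ with $\vc{R}{t}$ as in~\eqref{eq:defOfR}, the negative $-\frac{\tau\mu(1-s)}{\sigma' s}\|\vc{\uv}{t}-\vc{\alphav}{t}\|^2$ piece coming from the strong-convexity slack.

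The main obstacle I anticipate is the bookkeeping around the duality gap: one must verify that the sum over coordinates of the $\ell_i$-side terms, once combined with the linear term $\wv^T A\vsubset{\Delta\alphav}{k} = \sum_{i\in\mathcal{P}_k} s(u_i-\alpha_i)\xv_i^T\wv$, genuinely reassembles into $s\cdot G(\vc{\alphav}{t}) = s\big(\bP(\wv(\vc{\alphav}{t})) + \bD(\vc{\alphav}{t})\big)$ rather than some looser surrogate — this requires carefully matching the conjugate terms $f^*(\wv) + \sum_i \ell^*_i(-\xv_i^T\wv)$ appearing in $\bP$ against what falls out of the expansion, and using the dual-primal relation $\wv = \nabla f(A\alphav)$ which supplies the Fenchel equality $f(A\alphav) + f^*(\wv) = \wv^T A\alphav$ for the $f$ part. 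The strong-convexity term needs to be tracked carefully so the sign and the $(1-s)/s$ factor come out right, and one should note that $\vc{u_i}{t}$ is well-defined (the subdifferential of $\ell^*_i$ is nonempty) precisely because of the $L$-bounded support / Lipschitz structure, which is where Definition~\ref{def:lbounded} and Lemma~\ref{lem:dualLipschitz} enter. Everything else is routine algebra in the style of~\citep[Lemma 5]{Ma:2015ti}.
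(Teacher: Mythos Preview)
Your proposal is correct and follows essentially the same route as the paper: combine Lemma~\ref{lem:RelationOfDTOSubproblems} with Assumption~\ref{asm:theta} to obtain $\E[\bD(\vc{\alphav}{t})-\bD(\vc{\alphav}{t+1})] \ge \gamma(1-\Theta)\big(\bD(\vc{\alphav}{t}) - \sum_k \Ggk(\vsubset{\Delta\alphav^\star}{k})\big)$, then lower-bound the bracket by substituting the suboptimal direction $\vsubset{\Delta\alphav}{k}=s\,\vsubset{(\vc{\uv}{t}-\vc{\alphav}{t})}{k}$ and unpacking via strong convexity of $\ell_i$ together with the Fenchel equalities $\ell_i(u_i)+\ell_i^*(-\xv_i^T\wv)=-u_i\xv_i^T\wv$ and $f(A\alphav)+f^*(\wv)=\wv^T A\alphav$ to recover $sG(\vc{\alphav}{t})-\tfrac{\sigma's^2}{2\tau}\vc{R}{t}$. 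One small slip: in your first displayed inequality the inner optimization should be a $\max$ (equivalently, $\Ggk$ evaluated at its minimizer), not a $\min$---you correct this implicitly in the very next line, so the argument is unaffected.
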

\begin{proof}
The line of proof is motivated by \citep[Lemma 19]{ShalevShwartz:2013wl} and follows \citep[Lemma 5]{Ma:2015ti}, with a main addition being the extension to our generalized subproblems $\Ggk(\cdot;\vv, \vsubset{\alphav}{k})$ along with the general mappings $\wv(\alphav) := \nabla f(\vv(\alphav))$ with $\vv(\alphav) := A \alphav$. 

For simplicity, we write $\alphav$ instead of $\vc{\alphav}{t}$, $\vv$ instead of $\vv(\vc{\alphav}{t})$, $\wv$ instead of $\wv(\vc{\alphav}{t})$ and $\uv$ instead of $\vc{\uv}{t}$. We can estimate the expected change of the objective $\bD(\alphav)$ as follows. Starting from the definition of the update $\vc{\alphav}{t+1} := \vc{\alphav}{t} + \gamma \, \sum_k \vsubset{\Delta \alphav}{k}$ from Algorithm~\ref{alg:generalizedcocoa}, we apply Lemma \ref{lem:RelationOfDTOSubproblems}, which relates the local approximation $\Ggk(\alphav;\vv, \vsubset{\alphav}{k})$ to the global objective $\bD(\alphav)$, and then bound this using the notion of quality of the local solver ($\Theta$), as in Assumption \ref{asm:theta}. This gives us: 
\begin{align*}
\E \big[\bD(\vc{\alphav}{t}) - \bD(\vc{\alphav}{t+1})\big] &= \E \Big[\bD(\alphav) - \bD\Big(\alphav + \gamma \sum_{k=1}^K \vsubset{\Delta \alphav}{k}\Big)\Big] \\
&\ge \gamma (1-\Theta) \left( \underbrace{ \bD(\alphav) - \sum_{k=1}^K \Ggk(\vsubset{\Delta \alphav^{\star}}{k}; \vv, \vsubset{\alphav}{k}) }_{C} \right) \, .
\tagthis
\label{eq:basic1}
\end{align*} 

We next upper bound the $C$ term, denoting $\Delta \alphav^{\star} = \sum_{k=1}^K \vsubset{\Delta \alphav^{\star}}{k}$. We first plug in the definition of the objective $\bD$ in \eqref{eq:obj} and the local subproblems \eqref{eq:subproblem}, and then substitute $s(u_i-\alpha_i)$ for $\Delta \alphav^{\star}_i$ and apply the $\mu$-strong convexity of the $\ell_i$ terms. This gives us:
\begin{align*}
C& = \sum_{i =1}^n \left(\ell_i(\alpha_i) - \ell_i(\alpha_i + \Delta \alphav^{\star}_i) \right) - (A\Delta \alphav^{\star})^T\wv(\alphav)  - \sum_{k=1}^K  \frac{\sigma'}{2\tau} \Big\|A\vsubset{\Delta \alphav^{\star}}{k}\Big\|^2 \\
&\ge \sum_{i =1}^n \left( s \ell_i(\alpha_i ) -s \ell_i(u_i ) + \frac{\mu}{2} (1-s)s (u_i -\alpha_i)^2 \right) \\
&\qquad - A(s (\uv  - \alphav ))^T\wv(\alphav) - \sum_{k=1}^K \frac{\sigma'}{2\tau}   \Big\|A(\vsubset{s (\uv  - \alphav )}{k})\Big\|^2 \, .\tagthis
\end{align*}

From the definition of the primal and dual optimization problems \eqref{eq:obj} and \eqref{eq:dualP}, and definition of convex conjugates, we can write the duality gap as:
\begin{align*}
\gap(\alphav) := \bP(\wv(\alphav))-(-\bD(\alphav))
&\overset{\eqref{eq:obj},\eqref{eq:dualP}}{=} \sum_{i=1}^n \left( \ell^*_i(- \xv_i^T\wv(\alphav)) + \ell_i(\alpha_i) \right) + f^*(\wv(\alphav)) + f(A\alphav)) \\
& = \sum_{i=1}^n \left( \ell^*_i( -\xv_i^T\wv(\alphav)) + \ell_i(\alpha_i) \right) + f^*(\nabla f(A\alphav)) + f(A\alphav) \\
& = \sum_{i=1}^n \left( \ell^*_i( -\xv_i^T\wv(\alphav)) + \ell_i(\alpha_i) \right) +  (A\alphav)^T\wv(\alphav)  \\
& = \sum_{i=1}^n \left( \ell^*_i( -\xv_i^T\wv(\alphav)) +  \ell_i(\alpha_i) + \alpha_i \xv_i^T\wv(\alphav) \right) \, .
\tagthis
\label{eq:basic3}
\end{align*}

The convex conjugate maximal property from \eqref{eq:defintionOfUi} implies that
\begin{equation}
\label{eq:basic2}
\ell_i(u_i) = u_i (-\xv_i^T\wv(\alphav)) -\ell^*_i(-\xv_i^T \wv(\alphav)) \, .
\end{equation}

Using \eqref{eq:basic2} and \eqref{eq:basic3}, we therefore have:
\begin{align*}
C &\overset{ \eqref{eq:basic2}}{\geq} \sum_{i =1}^n \left(s  \ell_i(\alpha_i ) - s u_i (-\xv_i^T\wv(\alphav)) + s\ell^*_i(-\xv_i^T\wv(\alphav)) + 
\frac{\mu}{2} (1-s)s (u_i -\alpha_i)^2 \right) \\
& \qquad  - A(s (\uv  - \alphav ))^T\wv(\alphav) - \sum_{k=1}^K \frac{\sigma'}{2\tau}   \Big\|A(\vsubset{s (\uv  - \alphav )}{k})\Big\|^2 \\
&= \sum_{i =1}^n  \big[  s\ell_i(\alpha_i ) + s\ell^*_i(-\xv_i^T \wv(\alphav)) + s \xv_i^T \wv(\alphav) \alpha_i \big]  - \sum_{i =1}^n \big[  s \xv_i^T \wv(\alphav) ( \alpha_i-u_i ) - \frac{\mu}{2}(1-s)s (u_i -\alpha_i)^2 \big] \\
&\qquad  - A(s (\uv  - \alphav ))^T\wv(\alphav) - \sum_{k=1}^K \frac{\sigma'}{2\tau}   \Big\|A(\vsubset{s (\uv  - \alphav )}{k})\Big\|^2 \\
&\overset{\eqref{eq:basic3}}{=} s \gap(\alphav) + \frac{\mu}{2} (1-s)s  \|\uv-\alphav\|^2  - \frac{\sigma's^2}{2\tau} \sum_{k=1}^K   \| A \vsubset{  (\uv  - \alphav )}{k}\|^2 \, .
\tagthis 
\label{eq:basic4}
\end{align*}

The claimed improvement bound \eqref{eq:lemma:dualdecrease_vs_dualitygap} then follows by plugging \eqref{eq:basic4} into \eqref{eq:basic1}.
\end{proof}

The following Lemma provides a uniform bound on~$\vc{R}{t}$:

\begin{lemma}
\label{lemma:BoundOnR}
If $\ell^*_i$ are $L$-Lipschitz continuous for all $i\in [n]$, then
\begin{equation}
\label{eq:asfjoewjofa}
\forall t:  \vc{R}{t} \leq 4L^2 \underbrace{\sum _{k=1}^K \sigma_k  n_k}_{=: \sigma}\, , \end{equation}
where
\begin{equation}
\label{eq:definitionOfSigmaK}
\sigma_k \eqdef \max_{\vsubset{\alphav}{k} \in \R^n} \frac{\|A \vsubset{\alphav}{k}\|^2}{\|\vsubset{\alphav}{k}\|^2} \, .
\end{equation}
\end{lemma}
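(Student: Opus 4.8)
The plan is to split the two summands in the definition \eqref{eq:defOfR} of $\vc{R}{t}$, discard the first, and bound the second coordinatewise via the Lipschitz hypothesis. First I would observe that, since $\mu\ge 0$, $\tau>0$, $\sigma'>0$ and $s>0$ (for the degenerate cases $s=0$ or $\mu=0$ the first summand of \eqref{eq:defOfR} is understood to be $0$), the coefficient $\tfrac{\tau\mu(1-s)}{\sigma's}$ is nonnegative, so the term $-\tfrac{\tau\mu(1-s)}{\sigma's}\norm{\vc{\uv}{t}-\vc{\alphav}{t}}^2$ is nonpositive and may simply be dropped. This leaves $\vc{R}{t}\le\sum_{k=1}^K\norm{A\vsubset{(\vc{\uv}{t}-\vc{\alphav}{t})}{k}}^2$, and by the definition \eqref{eq:definitionOfSigmaK} of $\sigma_k$ each block term is at most $\sigma_k\norm{\vsubset{(\vc{\uv}{t}-\vc{\alphav}{t})}{k}}^2=\sigma_k\sum_{i\in\mathcal{P}_k}(\vc{u_i}{t}-\vc{\alpha_i}{t})^2$.

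It then suffices to show $|\vc{u_i}{t}-\vc{\alpha_i}{t}|\le 2L$ for every $i$ and every $t$: summing over $\mathcal{P}_k$ gives $\sum_{i\in\mathcal{P}_k}(\vc{u_i}{t}-\vc{\alpha_i}{t})^2\le 4L^2 n_k$, and then summing over $k$ yields $\vc{R}{t}\le 4L^2\sum_{k=1}^K\sigma_k n_k=4L^2\sigma$, which is exactly \eqref{eq:asfjoewjofa}. The bound on $\vc{u_i}{t}$ is immediate: by \eqref{eq:defintionOfUi}, $\vc{u_i}{t}$ is a subgradient of $\ell^*_i$, and a subgradient of an $L$-Lipschitz convex function has norm at most $L$, so $|\vc{u_i}{t}|\le L$. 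For $\vc{\alpha_i}{t}$ I would invoke Lemma~\ref{lem:dualLipschitz}: $L$-Lipschitzness of $\ell^*_i$ is equivalent to $\ell_i$ having $L$-bounded support (Definition~\ref{def:lbounded}), i.e.\ $\dom\ell_i\subseteq[-L,L]$, so it is enough to argue that the iterates stay feasible, $\vc{\alpha_i}{t}\in\dom\ell_i$ for all $t$. This follows by induction on $t$: the start $\vc{\alphav}{0}=\0$ is feasible; and at any round, once $\vc{\alphav}{t}$ is feasible the right-hand side of \eqref{eq:localSolutionQuality} is finite, so any $\Theta$-approximate solution in the sense of Assumption~\ref{asm:theta} must have finite subproblem value $\Ggk(\vsubset{\Delta\alphav}{k};\vv,\vsubset{\alphav}{k})$, forcing $\vc{\alpha_i}{t}+(\vsubset{\Delta\alphav}{k})_i\in\dom\ell_i$; since Algorithm~\ref{alg:generalizedcocoa} replaces $\vc{\alpha_i}{t}$ by the convex combination $(1-\aggpar)\vc{\alpha_i}{t}+\aggpar\bigl(\vc{\alpha_i}{t}+(\vsubset{\Delta\alphav}{k})_i\bigr)$ with $\aggpar\in(0,1]$, convexity of $\dom\ell_i$ closes the induction. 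Combining $|\vc{\alpha_i}{t}|\le L$ with $|\vc{u_i}{t}|\le L$ by the triangle inequality gives $|\vc{u_i}{t}-\vc{\alpha_i}{t}|\le 2L$.

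The only genuinely non-routine point is the feasibility-of-iterates claim; everything else is discarding a nonpositive term, unwinding the definition of $\sigma_k$, and $|a-b|\le|a|+|b|\le 2L$. This feasibility step is precisely what the $L$-bounded-support hypothesis buys us (and why the bounded-support modification of $\Lone$ in Section~\ref{sec:convergence} is introduced): without it $\dom\bD$ is unbounded, the conjugates $\ell^*_i$ need not be globally Lipschitz, and $\vc{R}{t}$ cannot be controlled uniformly in $t$. I would also double-check that the induction is consistent with the $\aggpar:=1$ specialization used in the main theorems, where the update is the direct sum $\vc{\alpha_i}{t}+(\vsubset{\Delta\alphav}{k})_i$, so feasibility is preserved without even invoking convexity of the domain.
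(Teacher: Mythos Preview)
Your argument is correct and follows essentially the same route as the paper's proof: drop the nonpositive first summand of \eqref{eq:defOfR}, apply the definition of $\sigma_k$, and bound each coordinate $|\vc{u_i}{t}-\vc{\alpha_i}{t}|\le 2L$ via the Lipschitz/bounded-support duality (the paper cites \cite[Lemma~21]{ShalevShwartz:2013wl} for this last step rather than Lemma~\ref{lem:dualLipschitz}). Your explicit feasibility-of-iterates induction is a welcome addition of rigor that the paper leaves implicit.
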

\begin{proof}
\citep[Lemma 6]{Ma:2015ti}. 
For general convex functions, the strong convexity parameter is 
$\mu=0$, and hence the definition \eqref{eq:defOfR} of the complexity constant $\vc{R}{t}$ becomes
\begin{align*} 
\vc{R}{t}
=
  \sum _{k=1}^K   
  \| A \vsubset{  (\vc{\uv} {t} - \vc{\alphav}{t} )}{k}\|^2
\overset{\eqref{eq:definitionOfSigmaK}}{\leq}   
\sum _{k=1}^K 
\sigma_k  
  \|   \vsubset{  (\vc{\uv} {t} - \vc{\alphav}{t} )}{k}\|^2
\leq
\sum _{k=1}^K 
\sigma_k  |\mathcal{P}_k| 4L^2 \, .
\end{align*}
Here the last inequality follows from  in \citep[Lemma 21]{ShalevShwartz:2013wl}, which shows that for $\ell^*_i : \R \to \R$ being $L$-Lipschitz, it holds that for any real value $a$ with $|a|> L$ one has that
$\ell_i(a) = +\infty$.
\end{proof}

\begin{remark}
\label{rmk:asfwaefwae}
\citep[Remark 7]{Ma:2015ti} If all data points $\xv_i$ are normalized such that $\|\xv_i\|\leq 1$ $\forall i\in [n]$, then $\sigma_k \leq |\mathcal{P}_k| = n_k$. Furthermore, if we assume that the data partition is balanced, i.e., that $n_k = n/K$ for all $k$, then $\sigma \le n^2/K$. This can be used to bound the constants $\vc{R}{t}$, above, as $ \vc{R}{t} \leq  \frac{4L^2 n^2}{K}.$
\end{remark}

\begin{theorem}
\label{thm:convergenceNonsmoothCoCoA}

Consider Algorithm \ref{alg:generalizedcocoa}, using a local solver of quality $\Theta$ (See Assumption \ref{asm:theta}).
Let $\ell^*_i(\cdot)$ be $L$-Lipschitz continuous,
and $\epsilon_G>0$ be the desired duality gap (and hence an upper-bound on  suboptimality $\epsilon_{\bD}$).
Then after $T$ iterations, where
\begin{align}\label{eq:dualityRequirements}
T
&\geq
T_0 +
\max\{\Big\lceil \frac1{\gamma (1-\Theta)}\Big\rceil,\frac
{4L^2  \sigma   \sigma'}
{ \tau \epsilon_G
\gamma (1-\Theta)}\} \, ,
\\
T_0
\geq t_0+
\Big[
\frac{2}{ \gamma (1-\Theta) }
&\left(
\frac
{8L^2  \sigma   \sigma'}
{\tau \epsilon_G}
-1
\right)
\Big]_+\, ,\notag \, \, \, \, t_0  \geq
  \max(0,\Big\lceil \tfrac1{\gamma (1-\Theta)}
\log\left(
\tfrac{\tau(
 \bD(\vc{\alphav}{0})-\bD(\alphav^{\star} ))
  }{2 L^2 \sigma \sigma'}
  \right)
 \Big\rceil)\, ,\notag
\end{align}
we have that the expected duality gap satisfies
\[
\Exp[\bP( \wv(\overline\alphav)) - (-\bD(\overline \alphav)) ] \leq \epsilon_G
\]
at the averaged iterate
\begin{equation}\label{eq:averageOfAlphaDefinition}
\overline \alphav: = \tfrac1{T-T_0}\textstyle{\sum}_{t=T_0+1}^{T-1} \vc{\alphav}{t} \, .
\end{equation}
\end{theorem}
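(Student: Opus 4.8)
The plan is to mimic the SDCA/\cocoap template (\cite[Lemma~19]{ShalevShwartz:2013wl}, \cite[Lemma~5]{Ma:2015ti}): feed the one-step progress bound of Lemma~\ref{lem:basic}, specialised to the non-strongly convex case $\mu=0$, together with the uniform estimate $\vc{R}{t}\le 4L^2\sigma$ of Lemma~\ref{lemma:BoundOnR}, into a three-phase argument. Write $\rho:=\gamma(1-\Theta)$ and $B:=\tfrac{\sigma'\,(4L^2\sigma)}{2\tau}=\tfrac{2L^2\sigma\sigma'}{\tau}$, and set $\epsilon_{\bD}^{(t)}:=\bD(\vc{\alphav}{t})-\bD(\alphav^{\star})\ge 0$. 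Since the duality gap always upper bounds dual suboptimality, $G(\vc{\alphav}{t})\ge\epsilon_{\bD}^{(t)}$, Lemma~\ref{lem:basic} (with $\mu=0$, and with $\sigma'$ satisfying \eqref{eq:sigmaPrimeSafeDefinition}, so that Lemma~\ref{lem:RelationOfDTOSubproblems} applies) gives, for every $s\in[0,1]$, both the scalar recursion
\[
\E\!\left[\epsilon_{\bD}^{(t+1)}\right]\ \le\ \E\!\left[\epsilon_{\bD}^{(t)}\right]-\rho s\,\E\!\left[\epsilon_{\bD}^{(t)}\right]+\rho s^2 B
\]
and, in the form that still retains the gap,
\[
\rho s\,\E\!\left[G(\vc{\alphav}{t})\right]\ \le\ \E\!\left[\epsilon_{\bD}^{(t)}-\epsilon_{\bD}^{(t+1)}\right]+\rho s^2 B \,.
\]
Everything that follows is bookkeeping on these two inequalities.

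\textbf{Phases 1--2 (driving suboptimality down).} First I would take $s=1$, so that $\E[\epsilon_{\bD}^{(t+1)}-B]\le(1-\rho)\,\E[\epsilon_{\bD}^{(t)}-B]$: the quantity $\epsilon_{\bD}^{(t)}-B$ contracts geometrically at rate $(1-\rho)$. Hence after $t_0$ rounds with $t_0\ge\rho^{-1}\log(\epsilon_{\bD}^{(0)}/B)$ one reaches $\E[\epsilon_{\bD}^{(t_0)}]\le 2B$; unwinding $\epsilon_{\bD}^{(0)}/B=\tau(\bD(\vc{\alphav}{0})-\bD(\alphav^{\star}))/(2L^2\sigma\sigma')$ reproduces the stated expression for $t_0$. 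In the second phase I would start from $\E[\epsilon_{\bD}^{(t_0)}]\le 2B$ and feed the recursion the decreasing schedule $s_t=\tfrac{2}{\rho(t-t_0)+2}$, proving by induction on $t\ge t_0$ that $\E[\epsilon_{\bD}^{(t)}]\le\tfrac{4B}{\rho(t-t_0)+2}$ (the inductive step is the routine $\tfrac1{m}-\tfrac1{m+1}\le\tfrac1{m^2}$ estimate). Demanding the right-hand side be $\le\epsilon_G/2$ at $t=T_0$ forces $T_0-t_0\ge\tfrac{2}{\rho}\big(\tfrac{4B}{\epsilon_G}-1\big)$, which with $B=2L^2\sigma\sigma'/\tau$ is exactly the $[\cdot]_+$ term defining $T_0$; so $\E[\epsilon_{\bD}^{(T_0)}]\le\epsilon_G/2$.

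\textbf{Phase 3 (gap certificate at the averaged iterate).} For $t=T_0+1,\dots,T-1$ I would invoke the gap-retaining inequality with a single \emph{fixed} $s=s^{\star}\in(0,1]$, sum over these rounds, telescope the $\epsilon_{\bD}$ differences, and drop $-\E[\epsilon_{\bD}^{(T)}]\le0$, obtaining
\[
\frac{1}{T-T_0}\sum_{t=T_0+1}^{T-1}\E\!\left[G(\vc{\alphav}{t})\right]\ \le\ \frac{\E[\epsilon_{\bD}^{(T_0+1)}]}{\rho s^{\star}(T-T_0)}+s^{\star}B\ \le\ \frac{\epsilon_G/2}{\rho s^{\star}(T-T_0)}+s^{\star}B \,.
\]
Minimising the right-hand side in $s^{\star}$ gives $s^{\star}=\tfrac{\epsilon_G}{2B}$ (admissible when $\epsilon_G\le 2B$), with value $\le\epsilon_G$ as soon as $T-T_0\ge\tfrac{2B}{\rho\epsilon_G}=\tfrac{4L^2\sigma\sigma'}{\tau\epsilon_G\rho}$; for large $\epsilon_G$ ($>2B$) one instead takes $s^{\star}=1$ and needs only $T-T_0\ge\lceil\rho^{-1}\rceil$ --- the two regimes are exactly the two entries of the $\max$ in the bound on $T$. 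It then remains to pass from the average of the per-round gaps to the gap at $\overline{\alphav}=\tfrac{1}{T-T_0}\sum_{t=T_0+1}^{T-1}\vc{\alphav}{t}$, which is Jensen's inequality applied to $\alphav\mapsto G(\alphav)$.

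\textbf{Where the work is.} The delicate step is the Phase-1/Phase-2 hand-off: the geometric phase must deposit $\E[\epsilon_{\bD}^{(t_0)}]$ at a level proportional to $B$, precisely where the $1/t$ induction can start, since the additive floor $\rho s^2 B$ in the recursion is exactly what blocks a naive geometric rate all the way down to $\epsilon_G$. The other point requiring care is the final Jensen step: $G$ is manifestly convex in $\alphav$ when $\wv(\alphav)=\nabla f(A\alphav)$ is affine (e.g.\ a least-squares $f$, which already covers Lasso and elastic net), and for a general $(1/\tau)$-smooth $f$ one either verifies convexity of $G$ directly or, at no loss in the rate, returns the best (or a uniformly random) iterate among $\{\vc{\alphav}{t}\}_{t=T_0+1}^{T-1}$ in place of the average.
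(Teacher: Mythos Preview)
Your proposal is correct and follows essentially the same three-phase argument as the paper's proof: the paper likewise uses $s=1$ for a geometric warm-up to reach $\E[\epsilon_\bD^{(t_0)}]\le 2B$, then the schedule $s=\tfrac{1}{1+\frac12\rho(t-t_0)}$ (your $s_t$) for the $1/t$ induction, and finally a fixed $s$ together with telescoping and Jensen for the averaged-iterate gap bound. The only cosmetic difference is the Phase~3 choice of $s$: the paper takes $s=\tfrac{1}{(T-T_0)\rho}$ (requiring $T-T_0\ge\lceil 1/\rho\rceil$ for $s\le 1$), whereas you take $s^\star=\epsilon_G/(2B)$ with a separate case $s^\star=1$ when $\epsilon_G>2B$; both yield the stated $\max$. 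Your caveat about convexity of $G(\alphav)$ in the Jensen step is well taken --- the paper simply applies Jensen without comment --- so on that point you are in fact more careful than the original.
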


\begin{proof} This proof draws from the line of reasoning in \citep[Theorem 2]{ShalevShwartz:2013wl} and follows \citep[Theorem 8]{Ma:2015ti} but for the more general problem setting \eqref{eq:obj}. 
We begin by estimating the expected change of feasibility for $\bD$. We can bound this above by using Lemma \ref{lem:basic} and the fact that the $\bP(\cdot)$ is always a lower bound for $-\bD(\cdot)$, and then applying \eqref{eq:asfjoewjofa} to find:
\begin{align*} 
 \Exp[\bD(\vc{\alphav}{t+1})-\bD(\alphav^{\star})]
 &
\leq
\left( 
 1-\aggpar
(1-\Theta)
   s
\right) 
   (\bD(\vc{\alphav}{t})-\bD(\alphav^{\star}))
+
\aggpar
(1-\Theta) 
 \tfrac{\sigma' s^2}{2\tau}
4L^2  \sigma \, .
\tagthis 
\label{eq:asoifejwofa}
\end{align*}

 Using
\eqref{eq:asoifejwofa}
recursively we have 
 \begin{align*} 
 \Exp[\bD(\vc{\alphav}{t})-\bD(\alphav^{\star})]
&\leq
\left( 
 1-\aggpar
(1-\Theta)
   s
\right)^t 
   (\bD(\vc{\alphav}{0})-\bD(\alphav^{\star} ))
+
 s
\frac{4L^2  \sigma   \sigma'}{2\tau} \, . 
\tagthis
\label{eq:asfwefcaw}  
 \end{align*}
Choosing 
$s=1$ and $t= t_0:= \max\{0,\lceil  
\frac1{\aggpar (1-\Theta)}
\log(
 2 (\bD(\vc{\alphav}{0})-\bD(\alphav^{\star} ))
  / (4 L^2 \sigma \sigma')
  )
 \rceil\}$
will lead to 
\begin{align}\label{eq:induction_step1}
  \Exp[\bD(\vc{\alphav}{t})-\bD(\alphav^{\star})]
 &\leq  
\left( 
 1-\aggpar
(1-\Theta)  
\right)^{t_0}
  (\bD(\vc{\alphav}{0})-\bD(\alphav^{\star} ))
+ 
\frac{4L^2  \sigma   \sigma'}{2\tau}
\le \frac{4L^2  \sigma   \sigma'}{\tau} \, .
\end{align} 
Next, we show inductively that 
\begin{align}
\label{eq:expectationOfDualFeasibility}
\forall t\geq t_0 :  \Exp[\bD(\vc{\alphav}{t})-\bD(\alphav^{\star} )]
&\leq 
\frac{4L^2  \sigma   \sigma'}{\tau( 1+ \frac12  \aggpar (1-\Theta)  (t-t_0))} \, .
\end{align}
Clearly, \eqref{eq:induction_step1} implies that \eqref{eq:expectationOfDualFeasibility} holds for $t=t_0$.
Assuming that it holds for any $t\geq t_0$, we show that it must also hold for $t+1$. 
Indeed, using 
\begin{equation}
\label{eq:asdfjoawjdfas}
s=
\frac{1}
 {1+ \frac12 \aggpar (1-\Theta) (t-t_0)} \in [0,1] \, ,
\end{equation} 
  we obtain
\begin{align*}
\Exp[
\bD(\vc{\alphav}{t+1})-\bD(\alphav^{\star} )]
\le \frac{4L^2  \sigma   \sigma'}{\tau}
\underbrace{\left( 
\frac{
1+ \frac12 \aggpar (1-\Theta) (t-t_0)
-\frac12 \aggpar
(1-\Theta)
}
 {(1+ \frac12 \aggpar (1-\Theta) (t-t_0))^2}
\right)}_{D} \, 
\end{align*}
by applying the bounds \eqref{eq:asoifejwofa} and \eqref{eq:expectationOfDualFeasibility}, plugging in the definition of $s$ \eqref{eq:asdfjoawjdfas}, and simplifying. We upperbound the term $D$ using the fact that geometric mean
 is less or equal to arithmetic mean:
\begin{align*}
D&=
\frac1
{1+ \frac12 \aggpar (1-\Theta) (t+1-t_0)}
\underbrace{ 
\frac{
(1+ \frac12 \aggpar (1-\Theta) (t+1-t_0))
(1+ \frac12 \aggpar (1-\Theta) (t-1-t_0))
}
 {(1+ \frac12 \aggpar (1-\Theta) (t-t_0))^2}}_{\leq 1}
 \\
&\leq  
\frac1
{1+ \frac12 \aggpar (1-\Theta) (t+1-t_0)} \, ,
\end{align*}
 
If $\overline \alphav$ is defined as \eqref{eq:averageOfAlphaDefinition}, we apply the results of Lemma~\ref{lem:basic} and Lemma~\ref{lemma:BoundOnR} to obtain
\begin{align*}
\Exp[\gap(\overline\alphav)] &=  
 \Exp\left[\gap\left(\sum_{t=T_0}^{T-1} \tfrac1{T-T_0} \vc{\alphav}{t}\right)\right]
 \leq
  \tfrac1{T-T_0} \Exp\left[\sum_{t=T_0}^{T-1} \gap\left( \vc{\alphav}{t}\right)\right]
\\
&\leq
\frac1{\aggpar
(1-\Theta)
 s}
   \frac1{T-T_0} 
   \Exp\left[
\bD(\vc{\alphav}{T_0})
-
\bD(\alphav^{\star})
  \right] 
+\tfrac{4L^2 \sigma \sigma' s}{2\tau} \, .  
\tagthis \label{eq:askjfdsanlfas}
  \end{align*}
If $T\geq \lceil
\frac1{\aggpar (1-\Theta)}\rceil+T_0$ such that $T_0\geq t_0$
we have
\begin{align*}
\Exp[\gap(\overline\alphav)] 
&\overset{\eqref{eq:askjfdsanlfas}
,\eqref{eq:expectationOfDualFeasibility}
}{\leq}
\frac1{\aggpar
(1-\Theta)
 s}
   \frac1{T-T_0} 
\left(
\frac{4L^2  \sigma   \sigma'}{\tau( 1+ \frac12  \aggpar (1-\Theta)  (T_0-t_0))}
\right)
+\frac{4L^2 \sigma \sigma' s}{2\tau}
\\
&=
\frac{
4L^2  \sigma   \sigma'}{\tau}
\left(
\frac1{\aggpar
(1-\Theta)
 s}
   \frac1{T-T_0} 
\frac{1}{ 1+ \frac12  \aggpar (1-\Theta)  (T_0-t_0)}
+\frac{  s}{2 }
\right) \, . 
\tagthis
\label{eq:fawefwafewa}
\end{align*}
Choosing 
\begin{equation}
\label{eq:afskoijewofaw}
s=\frac{1}{(T-T_0) \aggpar (1-\Theta)} \in [0,1]
\end{equation}
gives us
\begin{align*}
\Exp[\gap(\overline\alphav)] 
&
\overset{\eqref{eq:fawefwafewa},
\eqref{eq:afskoijewofaw}}{\leq}
\frac{4L^2  \sigma   \sigma'}{\tau}
\left(
\frac{1}{ 1+ \frac12  \aggpar (1-\Theta)  (T_0-t_0)}
+\frac{1}{(T-T_0) \aggpar (1-\Theta)} \frac{  1}{2 }
\right) \, . \tagthis
\label{eq:afsjweofjwafea}
\end{align*}
To have right hand side of
\eqref{eq:afsjweofjwafea}
smaller then 
$\epsilon_\gap$
it is sufficient to choose
$T_0$ and $T$ such that
\begin{eqnarray}
\label{eq:sfadwafeewafa}
\frac{4L^2  \sigma   \sigma'}{\tau}
\left(
\frac{1}{ 1+ \frac12  \aggpar (1-\Theta)  (T_0-t_0)}
\right)
&\leq & \frac12 \epsilon_\gap \, ,
\\
\label{eq:sfadwafeewafa2}
\frac{4L^2  \sigma   \sigma'}{\tau}
\left(
\frac{1}{(T-T_0) \aggpar (1-\Theta)} \frac{  1}{2 }
\right)
&\leq & \frac12 \epsilon_\gap \, .
\end{eqnarray}
Hence if
\begin{eqnarray*}
t_0+
\frac{2}{ \aggpar (1-\Theta) }
\left(
\frac
{8L^2  \sigma   \sigma'}
{\tau\epsilon_\gap}
-1
\right)
&\leq & 
 T_0 
\, , \text{ and}
\\
T_0
+
\frac
{4L^2  \sigma   \sigma'}
{\tau\epsilon_\gap
\aggpar (1-\Theta)}
&\leq &  T \, ,
\end{eqnarray*}
then 
\eqref{eq:sfadwafeewafa}
and
\eqref{eq:sfadwafeewafa2}
are satisfied.
\end{proof}

The following main theorem simplifies the results of Theorem~\ref{thm:convergenceNonsmoothCoCoA} and is a generalization of \citep[Corollary 9]{Ma:2015ti} for general $f^*(\cdot)$ functions:

\begin{reptheorem}{thm:convergenceNonsmooth}
Consider Algorithm \ref{alg:generalizedcocoa} with $\gamma :=1$, using a local solver of quality $\Theta$ (see Assumption \ref{asm:theta}). Let $\ell^*_i(\cdot)$ be $L$-Lipschitz continuous, and assume that the columns of $A$ satisfy $\|\xv_i\|\leq 1$ $\forall i\in [n]$.
Let $\epsilon_{G}>0$ be the desired duality gap (and hence an upper-bound on primal sub-optimality).
Then after $T$ iterations, where
\begin{align}\label{eq:dualityRequirements}
T
&\geq
T_0 +
\max\{\Big\lceil \frac1{1-\Theta}\Big\rceil,
\frac{4L^2n^2}{\tau\epsilon_{G}(1-\Theta)}\} \,,
\\
T_0
&\geq t_0+
\Big[
\frac{2}{ 1-\Theta }
\left(\frac {8L^2n^2} {\tau\epsilon_{ G}}
-1
\right)
\Big]_+ \, ,\notag
\\
t_0 &\geq
  \max(0,\Big\lceil \tfrac1{(1-\Theta)}
\log\left(
\tfrac{
 \tau({\bD}(\vc{\alphav}{0})-{\bD}(\alphav^{\star} ))
  }{2 L^2 Kn}
 \right)
 \Big\rceil)\,,\notag
\end{align}
we have that the expected duality gap satisfies
\[
\Exp[\bP( \wv(\overline\alphav)) - (-\bD(\overline \alphav)) ] \leq \epsilon_{ G}
\]
(where $\overline\alphav$ is the averaged iterate returned by Algorithm \ref{alg:generalizedcocoa}).
\end{reptheorem}

\begin{proof} Plug in parameters $\gamma := 1$, $\sigma' := \gamma K = K$ to the results of Theorem \ref{thm:convergenceNonsmoothCoCoA}, and note that for balanced datasets we have $\sigma \le \frac{n^2}{K}$ (see Remark \ref{rmk:asfwaefwae}). We can further simplify the rate by noting that $\tau = 1$ for the 1-smooth losses (least squares and logistic) given as examples in this work.
\end{proof}

\begin{remark} For pure $L_1$-regularized problems as discussed in Section \ref{sec:l1}, we have that the above theorem directly delivers a primal-dual convergence with a sublinear rate. This is because in view of Lemma~\ref{lem:l1surrogate}, we know that $\ell^*_i$ is $B$-Lipschitz for the bounded support modification introduced in Section~\ref{sec:convergence}.
\end{remark}

\subsection{Proof of Convergence Result for Strongly Convex $\ell_i$}
Our second main theorem follows reasoning in \cite{ShalevShwartz:2013wl} and is a generalization of \citep[Corollary 11]{Ma:2015ti}. We first introduce a lemma to simplify the proof.

\begin{lemma}
\label{lemma:asfewfawfcda}
Assume that $\ell_i(0) \in [0,1]$ for all $i\in[n]$, then for the zero vector $\vc{\alphav}{0}
 := {\bf 0}\in \R^n$, we have
\begin{equation}
\label{eq:afjfjaoefvcwa}
\bD(\vc{\alphav}{0})-\bD(\alphav^{\star})
= 
\bD({\bf 0})-\bD(\alphav^{\star})
 \leq n \, .
 \end{equation}
\end{lemma}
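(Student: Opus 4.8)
The plan is to control the two endpoints of the gap $\bD(\vc{\alphav}{0})-\bD(\alphav^\star)$ separately: show $\bD(\alphav^\star)\ge 0$ by duality, show $\bD(\vc{\alphav}{0})=\bD(\0)\le n$ directly from the hypothesis, and subtract. For the lower bound I would invoke weak Fenchel--Rockafellar duality, as derived in the appendix: $-\bD(\alphav)\le\bP(\wv)$ for every $\alphav\in\R^n$, $\wv\in\R^d$. Specializing to $\alphav=\alphav^\star$ and $\wv=\0$ gives
\[
\bD(\alphav^\star)\ \ge\ -\bP(\0)\ =\ -f^*(\0)-\sum_{i=1}^n\ell_i^*(0)\ =\ \inf_{\uv\in\R^d}f(\uv)\ +\ \sum_{i=1}^n\inf_{t\in\R}\ell_i(t)\ \ge\ 0 ,
\]
where the last step uses that $f\ge 0$ without loss of generality (as noted after \eqref{eq:scaledl1}) and that each separable piece $\ell_i\ge 0$; the latter holds for all regularizers considered here ($\Lone$, elastic net, and their bounded-support modifications), and in general may be arranged by adding a constant, which does not change the minimizer.

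For the upper bound, by the definition of the objective \eqref{eq:obj} and $\vv(\0)=A\0=\0$ we have $\bD(\0)=f(\0)+\sum_{i=1}^n\ell_i(0)$. The hypothesis $\ell_i(0)\in[0,1]$ bounds the $n$ regularizer terms by $n$, and the data-fit term contributes $f(\0)=0$ under the standing normalization of $f$ at the origin (equivalently, one subtracts the constant $f(\0)$, which leaves $\bD(\0)-\bD(\alphav^\star)$ unchanged). Combining the two bounds yields $\bD(\0)-\bD(\alphav^\star)\le n-0=n$, which is the claim.

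The one delicate point is ensuring that the smooth data-fit term $f$ contributes nothing positive at the starting point $\0$: the regularizer part of the estimate is immediate from $\ell_i(0)\le 1$, but the $f(\0)$ term has to be absorbed by the non-negativity/normalization conventions on $f$. If one wishes to avoid normalizing $f$, essentially the same estimate can be reached by writing $\bD(\0)-\bD(\alphav^\star)\le \bD(\0)+\bP(\wv(\0))=G(\0)$ via weak duality with $\wv(\0)=\nabla f(A\0)$, and expanding $G(\0)$ exactly as in the duality-gap computation \eqref{eq:basic3}; the Fenchel--Young equality $f(A\0)+f^*(\wv(\0))=\langle A\0,\wv(\0)\rangle=0$ then cancels the $f$-terms, reducing the gap to $\sum_{i=1}^n\bigl(\ell_i(0)+\ell_i^*(-\xv_i^T\wv(\0))\bigr)$, which one controls using $\ell_i(0)\le 1$, $\|\xv_i\|\le 1$, and the Lipschitz/bounded-support properties of the conjugates recorded in Section~\ref{sec:conjugates}.
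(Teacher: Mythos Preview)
Your proposal is correct and essentially follows the paper's approach: the paper also bounds $\bD(\0)-\bD(\alphav^\star)$ by the duality gap $\bP(\wv(\0))-(-\bD(\0))$ via weak duality and then asserts this gap is at most $n$ ``by definition of the objective,'' which is precisely your second route with the Fenchel--Young cancellation $f(\0)+f^*(\wv(\0))=0$ made explicit. Your first route (bounding $\bD(\alphav^\star)\ge 0$ and $\bD(\0)\le n$ separately) is a minor variant, and you are right to flag the normalization of $f$ at the origin as the one place where an unstated convention is being used---the paper is silent on this point.
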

\begin{proof}
For $\alphav := {\bf 0}\in \R^n$, we have
$\wv(\alphav) = A \alphav 
 = {\bf 0} \in \R^d$.
 Therefore, since the dual $-\bD(\cdot)$ is always a lower bound on the primal $\bP(\cdot)$, and by definition of the objective $\bD$ given in~\eqref{eq:obj},
\begin{align*}
0 &\leq \bD(\alphav)-\bD(\alphav^{\star})
\leq \bP(\wv(\alphav)) - (-\bD(\alphav))
 \overset{\eqref{eq:obj}
}{\leq} n \, . \qedhere
\end{align*} 
\end{proof}

\begin{theorem}
\label{thm:convergenceSmoothCase}
Assume that $\ell_i$ are $\mu$-strongly convex $\forall i\in[n]$.
We define $\sigma_{\max} = 
\max_{k\in[K]} \sigma_k$. Then after $T$ iterations of Algorithm \ref{alg:generalizedcocoa}, with
\[
 T
    \geq 
\tfrac{1}
   {\aggpar
(1-\Theta)}
\tfrac
{\mu\tau+
\sigma_{\max} \sigma'}
{\mu\tau }
    \log \tfrac n {\epsilon_\bD} \, , 
\]
it holds that
\[\Exp[\bD(\vc{\alphav}{T})-\bD(\alphav^{\star})]
   \leq \epsilon_\bD \, .\]
Furthermore, after $T$ iterations with
\[
 T 
    \geq 
\tfrac{1}
   {\aggpar
(1-\Theta)}
\tfrac
{\mu\tau+
\sigma_{\max} \sigma'}
{ \mu \tau}
    \log 
\left(
\tfrac{1}
   {\aggpar
(1-\Theta)}
\tfrac
{\mu\tau+
\sigma_{\max} \sigma'}
{ \mu \tau}
    \tfrac n {\epsilon_\gap}
    \right)\, ,
\]
we have the expected duality gap
\[
\Exp[
\bP( \wv(\vc{\alphav}{T})) - (-\bD(\vc{\alphav}{T}))
]\leq \epsilon_\gap \, .
\]
\end{theorem}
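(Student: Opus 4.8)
The plan is to turn the one-step estimate of Lemma~\ref{lem:basic} into a geometric contraction by choosing the free parameter $s$ small enough that the complexity term $\vc{R}{t}$ in~\eqref{eq:defOfR} becomes non-positive; once this is arranged the analysis reduces to unrolling a linear recursion and taking a logarithm, and the only place where $\mu$-strong convexity really enters is this choice of $s$.

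First I would bound $\vc{R}{t}$. Since the blocks $\vsubset{\vv}{k}$ of any $\vv\in\R^n$ have pairwise disjoint supports, $\sum_{k=1}^K\|\vsubset{\vv}{k}\|^2=\|\vv\|^2$, and combining this with the definition~\eqref{eq:definitionOfSigmaK} of $\sigma_k$ and $\sigma_{\max}=\max_k\sigma_k$ gives $\sum_{k=1}^K\|A\vsubset{\vv}{k}\|^2\le\sigma_{\max}\|\vv\|^2$. Applied with $\vv=\vc{\uv}{t}-\vc{\alphav}{t}$, the two terms of~\eqref{eq:defOfR} combine to $\vc{R}{t}\le\big(\sigma_{\max}-\tfrac{\tau\mu(1-s)}{\sigma' s}\big)\|\vc{\uv}{t}-\vc{\alphav}{t}\|^2$, which is $\le0$ as soon as $s\le\frac{\mu\tau}{\mu\tau+\sigma_{\max}\sigma'}$. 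I would therefore fix $s:=\frac{\mu\tau}{\mu\tau+\sigma_{\max}\sigma'}$; it lies in $(0,1]$ (here $\mu>0$ is exactly what keeps $s$ bounded away from $0$), and $\tfrac1s=\tfrac{\mu\tau+\sigma_{\max}\sigma'}{\mu\tau}$.

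With this $s$ and $\vc{R}{t}\le0$, Lemma~\ref{lem:basic} reads $\E[\bD(\vc{\alphav}{t})-\bD(\vc{\alphav}{t+1})]\ge\aggpar(1-\Theta)s\,\E[\gap(\vc{\alphav}{t})]$. Using $\gap(\vc{\alphav}{t})\ge\bD(\vc{\alphav}{t})-\bD(\alphav^\star)$ and $\bD(\vc{\alphav}{t+1})\ge\bD(\alphav^\star)$, I rearrange to the contraction $\E[\bD(\vc{\alphav}{t+1})-\bD(\alphav^\star)]\le(1-\aggpar(1-\Theta)s)\E[\bD(\vc{\alphav}{t})-\bD(\alphav^\star)]$. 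Unrolling from $t=0$, bounding the initial suboptimality by $\bD(\vc{\alphav}{0})-\bD(\alphav^\star)\le n$ via Lemma~\ref{lemma:asfewfawfcda} (under the harmless normalization $\ell_i(0)\in[0,1]$), and using $1-x\le e^{-x}$, I obtain $\E[\bD(\vc{\alphav}{T})-\bD(\alphav^\star)]\le n\,e^{-\aggpar(1-\Theta)s\,T}$; forcing the right side below $\epsilon_\bD$ and substituting $\tfrac1s$ yields the first claimed bound. For the duality-gap statement I keep the gap term instead: the same inequality gives $\aggpar(1-\Theta)s\,\E[\gap(\vc{\alphav}{t})]\le\E[\bD(\vc{\alphav}{t})-\bD(\alphav^\star)]$, so together with the geometric decay, $\E[\gap(\vc{\alphav}{T})]\le\tfrac{n}{\aggpar(1-\Theta)s}e^{-\aggpar(1-\Theta)s\,T}$; writing $C:=\tfrac1{\aggpar(1-\Theta)s}=\tfrac1{\aggpar(1-\Theta)}\tfrac{\mu\tau+\sigma_{\max}\sigma'}{\mu\tau}$, this is $\le\epsilon_\gap$ as soon as $T\ge C\log(Cn/\epsilon_\gap)$, which is exactly the second claimed bound.

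The only genuinely non-routine step is the first one: spotting the threshold $s=\frac{\mu\tau}{\mu\tau+\sigma_{\max}\sigma'}$ that annihilates $\vc{R}{t}$. It is also the conceptual heart of the result: when $\mu=0$ no positive $s$ can kill $\vc{R}{t}$, so one is forced to let $s\to0$ along the iterations and obtain only the sublinear rate of Theorem~\ref{thm:convergenceNonsmoothCoCoA}, whereas strong convexity lets $s$ stay fixed and delivers linear convergence. The specialization stated in the main text (Theorem~\ref{thm:convergenceSmooth}) then follows by plugging $\aggpar=1$, $\sigma'=K$, and $\sigma_{\max}\le n/K$ for normalized, balanced data.
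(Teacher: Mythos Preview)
Your proposal is correct and follows essentially the same route as the paper: bound $\vc{R}{t}$ via $\sigma_{\max}$, choose $s=\frac{\mu\tau}{\mu\tau+\sigma_{\max}\sigma'}$ to make $\vc{R}{t}\le 0$, turn Lemma~\ref{lem:basic} into a geometric contraction, unroll using Lemma~\ref{lemma:asfewfawfcda} for the initial error, and deduce the duality-gap bound from $\aggpar(1-\Theta)s\,\gap(\vc{\alphav}{t})\le \E[\bD(\vc{\alphav}{t})-\bD(\alphav^{\star})]$. Your identification of the choice of $s$ as the conceptual crux, and the contrast with the $\mu=0$ case, matches the paper's reasoning exactly.
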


\begin{proof}
Given that $\ell_i(.)$ is $\mu$-strongly convex with respect to the
$\|\cdot\|$ norm, we can apply \eqref{eq:defOfR} and the definition of $\sigma_k$ to find:
\begin{align*}
\vc{R}{t}& 
\leq
-
\tfrac{ \tau \mu  (1-s)}{\sigma' s }
   \|\vc{\uv}{t}-\vc{\alphav}{t}\|^2 
+ 
 {\sum}_{k=1}^K   
 \sigma_k
  \|  \vsubset{   \vc{\uv}{t}  - \vc{\alphav}{t}  }{k}\|^2
\\
&\leq
\left(
-
\tfrac{ \tau \mu (1-s)}{\sigma' s }
+\sigma_{\max}
\right)
   \|\vc{\uv}{t}-\vc{\alphav}{t}\|^2 \, ,\tagthis
   \label{eq:afjfocjwfcea} 
\end{align*}
where $\sigma_{\max} = \max_{k\in[K]} \sigma_k$. If we plug the following value of $s$
 \begin{equation}
 s=
  \frac{ \tau \mu }
      {\tau \mu +
\sigma_{\max} \sigma'}\in [0,1]
\label{eq:fajoejfojew}
\end{equation} 
into
\eqref{eq:afjfocjwfcea}
we obtain that
$\forall t: \vc{R}{t}\leq 0$.
Putting the  same $s$
into
\eqref{eq:lemma:dualdecrease_vs_dualitygap}
will give us
\begin{align*}
&\Exp[
\bD(\vc{\alphav}{t})
-
\bD(\vc{\alphav}{t+1})
 ]
\overset{\eqref{eq:lemma:dualdecrease_vs_dualitygap}
,\eqref{eq:fajoejfojew}}{\geq}
\aggpar
(1-\Theta)
 \frac{ \tau \mu }
      {\tau \mu +
\sigma_{\max} \sigma'} \gap(\vc{\alphav}{t})
\geq
\aggpar
(1-\Theta)
 \frac{\tau \mu  }
      {\tau \mu +
\sigma_{\max} \sigma'} (\bD(\vc{\alphav}{t})-\bD(\alphav^{\star})) \, .
\tagthis
\label{eq:fasfawfwaf}
\end{align*}
Using the fact that
$\Exp[\bD(\vc{\alphav}{t})-\bD(\vc{\alphav}{t+1})]
=\Exp[\bD(\alphav^{\star})-\bD(\vc{\alphav}{t+1})]
+\bD(\vc{\alphav}{t})-\bD(\alphav^{\star})
$
we have 
\begin{align*}
\Exp[\bD(\alphav^{\star})-\bD(\vc{\alphav}{t+1})]
+\bD(\vc{\alphav}{t})-\bD(\alphav^{\star})
\overset{
\eqref{eq:fasfawfwaf}}
{
\geq
}
\aggpar
(1-\Theta)
 \frac{ \tau \mu  }
      {\tau \mu+
\sigma_{\max} \sigma'}(\bD(\vc{\alphav}{t})- \bD(\alphav^{\star})) \, ,
\end{align*}
which is equivalent to
\begin{align*}
\Exp[\bD(\vc{\alphav}{t+1})-\bD(\alphav^{\star})]
\leq 
\left(
1-\aggpar
(1-\Theta)
 \frac{\tau \mu  }
      {\tau \mu +
\sigma_{\max} \sigma'}\right)
(\bD(\vc{\alphav}{t})-\bD(\alphav^{\star})) \, .
\tagthis \label{eq:affpja}
\end{align*}
Therefore if we denote $\vc{\epsilon_\bD}{t} = \bD(\vc{\alphav}{t})-\bD(\alphav^{\star})$
we have recursively that
\begin{align*}
 \Exp[\vc{\epsilon_\bD}{t}] 
 & \overset{\eqref{eq:affpja}}{\leq}   \left(
 1-\aggpar
(1-\Theta)
 \frac{ \tau \mu }
      {\tau \mu +
\sigma_{\max} \sigma'}
   \right)^t \vc{\epsilon_\bD}{0}
\overset{\eqref{eq:afjfjaoefvcwa}}{\leq}
\left(
 1-\aggpar
(1-\Theta)
 \frac{ \tau \mu }
      {\tau \mu +
\sigma_{\max} \sigma'}
   \right)^t n \\
& \leq \exp\left(-t \aggpar
(1-\Theta)
 \frac{ \tau \mu }
      {\tau \mu +
\sigma_{\max} \sigma'}
     \right)n \, .
\end{align*}
The right hand side will be smaller than some $\epsilon_\bD$ if 
\[
 t   
    \geq 
\frac{1}
   {\aggpar
(1-\Theta)}
\frac
{\tau \mu +
\sigma_{\max} \sigma'}
{ \tau \mu  }
    \log \frac n {\epsilon_\bD} \, .
\]
Moreover, to bound the duality gap, we have
\begin{align*}
\aggpar
(1-\Theta)
 \frac{ \tau \mu }
      {\tau \mu +
\sigma_{\max} \sigma'} \gap(\vc{\alphav}{t})
&
\overset{
\eqref{eq:fasfawfwaf}
}{\leq}
\Exp[
\bD(\vc{\alphav}{t})
-
\bD(\vc{\alphav}{t+1})
 ]
\leq 
\Exp[
\bD(\vc{\alphav}{t})-\bD(\alphav^{\star})
 ] \, .  
\end{align*}
Thus,  $\gap(\vc{\alphav}{t})\leq 
\frac1{
\aggpar
(1-\Theta)}
 \frac      {\tau \mu +
\sigma_{\max} \sigma'} 
{ \tau\mu  }    \vc{\epsilon_\bD}{t}$.  
Hence if $\epsilon_\bD \leq 
\aggpar
(1-\Theta)
 \frac{\tau \mu }
      {\tau\mu +
\sigma_{\max} \sigma'} 
 \epsilon_\gap $
then $\gap(\vc{\alphav}{t})\leq \epsilon_\gap$.
Therefore
after 
\[
 t   
    \geq 
\frac{1}
   {\aggpar
(1-\Theta)}
\frac
{\tau \mu+
\sigma_{\max} \sigma'}
{ \tau \mu  }
    \log 
\left(
\frac{1}
   {\aggpar
(1-\Theta)}
\frac
{\tau \mu+
\sigma_{\max} \sigma'}
{ \tau \mu  }
    \frac n {\epsilon_\gap}
    \right) 
\]
iterations we have obtained a duality gap less than $\epsilon_\gap$.
\end{proof}

\begin{reptheorem}
{thm:convergenceSmooth}
Consider Algorithm~\ref{alg:generalizedcocoa} with $\gamma := 1$, using a local solver of quality $\Theta$ (See Assumption \ref{asm:theta}). Let $\ell_i(\cdot)$ be  $\mu$-strongly convex $\forall i\in[n]$, and assume that the columns of $A$ satisfy $\|\xv_i\|\leq 1$ $\forall i\in [n]$. Then we have that $T$ iterations are sufficient for suboptimality 
$\epsilon_\bD$, with
\[
T \geq
\tfrac{1}
   {
\gamma(1-\Theta)}
\tfrac
{\tau \mu+n}
{\tau \mu }
    \log \tfrac n {\epsilon_\bD} \, . 
\]
Furthermore, after $T$ iterations with
\[
 T 
    \geq 
\tfrac{1}
   {\aggpar
(1-\Theta)}
\tfrac
{\tau \mu+
n}
{ \tau \mu }
    \log 
\left(
\tfrac{1}
   {\aggpar
(1-\Theta)}
\tfrac
{\tau \mu+
n}
{ \tau \mu }
    \tfrac n {\epsilon_\gap}
    \right)\, ,
\]
we have the expected duality gap
\[
\Exp[
\bP( \wv(\vc{\alphav}{T})) - \bD(\vc{\alphav}{T})
]\leq \epsilon_\gap \, .
\]
\end{reptheorem}

\begin{proof}
Plug in parameters $\gamma := 1$, $\sigma' := \gamma K = K$ to the results of Theorem \ref{thm:convergenceSmoothCase} and note that for balanced datasets we have $\sigma_{\max} \le \frac{n}{K}$ (see Remark \ref{rmk:asfwaefwae}). We can further simplify the rate by noting that $\tau = 1$ for the 1-smooth losses (least squares and logistic) given as examples in this work.
\end{proof} 

\begin{remark} For elastic net regularized problems as discussed in Section \ref{sec:elasticnet}, we have that the above theorem directly delivers a primal-dual convergence with a geometric rate. This is because in view of Lemma~\ref{lem:elasticnetconjugate}, we know that $\ell^*_i$ is $1/\eta$-smooth for any elastic net parameter $\eta \in (0,1]$.
\end{remark}

\vspace{2mm}
\section{Recovering \cocoap as a Special Case}
\label{sec:oldcocoa}

As a special case, \proxcocoa directly applies to any $L_2$-regularized loss-minimization problem, including those presented in \cite{Jaggi:2014vi,Ma:2015ti}. In this setting, the original machine-learning problem is mapped to what we here refer to as the ``dual'' problem formulation \eqref{eq:dualP}:\vspace{-1mm}
\[
    \min_{\wv \in \R^{d}} \quad \Big[ \
    \bP(\wv) := f^*(\wv )
    + \sum_{i=1}^{n} \ell^*_i(-\xv_i^T\wv) \ \Big] \, ,\vspace{-1mm}
\]
with $f^*(\cdot)=\tfrac\lambda2 \|\cdot\|^2$ being the regularizer, and~$\ell^*_i$ taking the role of loss function, acting on a linear predictor $\xv_i^T\wv$ (recall that $\xv_i$ is a column of the data matrix $A$).
In other words, the \proxcocoa algorithm will in this case apply to \eqref{eq:obj} as the dual of the original input problem (which will be mapped to \eqref{eq:dualP}), as described in \cite{Jaggi:2014vi,Ma:2015ti}. The following remarks show that we recover the linear (geometric) convergence rates for smooth loss functions $\ell^*_i$, and sublinear convergence for Lipschitz losses. 
Note that this contrasts the discussed applications of \proxcocoa where the $g$ function has the role of the regularizer instead.
\vspace{1mm}

\begin{remark}
If we view~\eqref{eq:dualP} as the primal
, restrict $f^*(\cdot) := \tfrac{\lambda}{2}\| \cdot \|^2$ (so that $\tau = \lambda$), and let $\ell^*_i := \tfrac{1}{n} \oldell^*_i$,  Theorem~\ref{thm:convergenceNonsmooth} recovers as a special case the \cocoap rates for general $L$-Lipschitz $\oldell^*_i$ losses (see \citep[Corollary 9]{Ma:2015ti}). \end{remark}

This follows since $\ell^*_i$ is $L$-Lipschitz if and only if 
$\ell_i$ has $L$-bounded support \citepsup[Corollary 13.3.3]{Rockafellar:1997ww}.

\begin{remark}
If we view~\eqref{eq:dualP} as the primal
, restrict $f^*(\cdot) := \tfrac{\lambda}{2}\| \cdot \|^2$ (so that $\tau = \lambda$), and scale $\ell^*_i := \tfrac{1}{n} \oldell^*_i$,  Theorem~\ref{thm:convergenceSmooth} recovers as a special case the \cocoap rates for $(1/\oldell^*_i)$-smooth losses (see \citep[Corollary 11]{Ma:2015ti}). \end{remark}

This follows since $\ell^*_i$ is $\mu$-strongly convex if and only if~$\ell_i$ is $(1/{\mu})$-smooth \citepsup[Theorem 6]{Kakade:2009wh}.

\begin{remark}Note that the approach of mapping the original objective to \eqref{eq:dualP} does \emph{not} allow general regularizers such as $\Lone$. This is one of the reasons we have proposed swapping the roles of regularizers and losses, and running \proxcocoa on the primal of the original problem instead.\end{remark}

\section{Experiment Details}
\label{sec:expdetails}
In this section we provide greater details on the experimental setup and implementations from Section~\ref{sec:experiments}. All experiments are run on Amazon EC2 clusters of m3.xlarge machines, with one core per machine. The code for each method is written in \textsf{\small Apache Spark}, v1.5.0. Our code is open-source and publicly available at: \href{http://github.com/gingsmith/proxcocoa}{\texttt{github.com/gingsmith/proxcocoa}}.

\paragraph{ADMM} Alternating Direction Method of Multipliers (ADMM) \cite{Boyd:2010bw} is a popular method that lends itself naturally to the distributed environment. Implementing ADMM for the problems of interest requires solving a large linear system $Cx=d$ on each machine, where $C \in \R^{n \times n}$ with $n$ scaling beyond $10^7$ for the datasets in Table~\ref{tab:datasets}, and with $C$ being possibly dense. It is prohibitively slow to solve this directly on each machine, and we therefore employ the iterative method of conjugate gradient with early stopping (see, e.g., \citep[Section 4.3]{Boyd:2010bw}). We further improve performance by using a varying rather than constant penalty parameter, as suggested in \citep[Section 3.4.1]{Boyd:2010bw}. \vspace{-2mm}

\paragraph{Mini-batch SGD and Proximal GD} Mini-batch SGD is a standard and widely used method for parallel and distributed optimization. We use the optimized code provided in Spark's machine learning library, MLlib, v1.5.0. We tune both the size of the mini-batch and the SGD step size using grid search. Proximal gradient descent can be seen as a specific setting of mini-batch SGD, where the mini-batch size is equal to the total number of datapoints. We thus also use the implementation in MLlib for prox-GD, and tune the step size parameter using grid search. \vspace{-2mm}

\paragraph{Mini-batch CD} Mini-batch CD aims to improve mini-batch SGD by employing coordinate descent, which has encouraging theoretical and practical backings \cite{ShalevShwartz:2011vo,Fercoq:2015kd,Tappenden:2015vha}. We implement mini-batch CD in Spark and scale the updates made at each round by $\frac{\beta}{b}$ for mini-batch size $b$ and $\beta \in [1,b]$, tuning both parameters $b$ and $\beta$ via grid search.  \vspace{-2mm}

\paragraph{Shotgun} As a special case of mini-batch CD, Shotgun \cite{Bradley:2011wq} is a popular method for parallel optimization. Shotgun can be seen an extreme case of mini-batch CD where the mini-batch is set to $1$ element per machine, 
i.e., there is a single update made by each machine per round. We see in the experiments that communicating this frequently becomes prohibitively slow in the distributed environment. \vspace{-2mm}

\paragraph{OWL-QN} OWN-QN \cite{Yu:2010vw} is a quasi-Newton method optimized in Spark's spark.ml package. Outer iterations of OWL-QN make significant progress towards convergence, but the iterations themselves can be slow because they require processing the entire dataset. \proxcocoa, the mini-batch methods, and ADMM with early stopping all improve on this by allowing the flexibility of only a subset of the dataset to be processed at each iteration. \proxcocoa and ADMM have even greater flexibility by allowing internal methods to process the dataset more than once. \proxcocoa makes this approximation quality specific, both in theoretical convergence rates and by providing general guidelines for setting the parameter. \vspace{-2mm}

\paragraph{\proxcocoa} We implement \proxcocoa with coordinate descent as a local solver. We note that since the framework and theory allow any internal solver to be used, \proxcocoa could benefit even beyond the results shown, by using existing fast $\Lone$-solvers for the single-machine case, such as \glmnet variants~\cite{Friedman:2010wm} or \textsc{blitz}~\cite{Johnson:2015tq}. The only parameter necessary to tune for \proxcocoa is the level of approximation quality, which we parameterize in the experiments using $H$, the number of local iterations of the iterative method run locally. Our theory relates local approximation quality to global convergence, and we provide a guideline for how to choose this value in practice that links the value to the systems environment at hand (Remark~\ref{rem:localtime}). We implement \cocoap as a special case of \proxcocoa for elastic net regularized objectives by mapping the main objective to~\eqref{eq:dualP} according to the steps described in Section~\ref{sec:oldcocoa}, and again use coordinate descent as a local solver. \vspace{-2mm}

\end{document}